\newtheorem{assumption}{Assumption}
\newcommand{\cX}{\ensuremath{\mathcal{X}}}
\newcommand{\cF}{\ensuremath{\mathcal{F}}}
\newcommand{\R}{\ensuremath{\mathbb{R}}}
\renewcommand{\P}{\ensuremath{\mathbb{P}}}
\newcommand{\rE}{{\mathbb E}}
\DeclareMathOperator*{\argmax}{\text{argmax}}
\DeclareMathOperator*{\argmin}{\text{argmin}}
\DeclareMathOperator*{\arginf}{\text{arginf}}
\newcommand{\cP}{\ensuremath{\mathcal{P}}}
\newcommand{\Rhat}{\ensuremath{\widehat{R}}}
\newcommand{\fhat}{\ensuremath{\widehat{f}}}
\newcommand{\order}{\mathcal{O}}
\newcommand{\dref}{\ensuremath{P_0}}
\newcommand{\cW}{\ensuremath{\mathcal{W}}}
\newcommand{\cZ}{\ensuremath{\mathcal{Z}}}
\newcommand{\sigmah}{\ensuremath{\widehat{\sigma}}}
\newcommand{\betah}{\ensuremath{\widehat{\beta}}}
\newcommand{\dro}{\ensuremath{\text{DRO}}\xspace}
\newcommand{\dmu}{\ensuremath{\Delta_{\mu}}}
\newcommand{\ds}{\ensuremath{\Delta_S}}
\newcommand{\regret}{{\mathrm{Regret}}}
\newcommand{\event}{\ensuremath{\mathcal{E}}}
\newcommand{\alg}{\ensuremath{\text{MRO}}\xspace}
\newcommand{\alglong}{Minimax Regret Optimization\xspace}
\newcommand{\algscale}{\ensuremath{\text{SMRO}}\xspace}
\title{\alglong for Robust Machine Learning under Distribution Shift}
\author{Alekh Agarwal \\alekhagarwal@google.com\\Google Research \and Tong Zhang \\tozhang@google.com\\Google Research \& HKUST}
\begin{document}
\date{}
\maketitle

\begin{abstract}
    In this paper, we consider learning scenarios where the learned model is evaluated under an unknown test distribution which potentially differs from the training distribution (i.e. distribution shift). The learner has access to a family of weight functions such that the test distribution is a reweighting of the training distribution under one of these functions, a setting typically studied under the name of Distributionally Robust Optimization (\dro). 
    We consider the problem of deriving regret bounds in the classical learning theory setting, and require that the resulting regret bounds hold uniformly for all potential test distributions. We show that the \dro formulation does not guarantee uniformly small regret under distribution shift. We instead propose an alternative method called \alglong (\alg), and show that under suitable conditions this method achieves \emph{uniformly low regret} across all test distributions.  
    We also adapt our technique to have stronger guarantees when the test distributions are heterogeneous in their similarity to the training data. 
    Given the widespead optimization of worst case risks in current approaches to robust machine learning, we believe that \alg can be a strong alternative to address distribution shift scenarios.
\end{abstract}

\section{Introduction}

Learning good models for scenarios where the evaluation happens under a test distribution that differs from the training distribution has become an increasingly important problem in machine learning and statistics. It is particularly motivated by the widespread practical deployment of machine learning models, and the observation that model performance deteriorates when the test distribution differs from the training distribution. Some motivating scenarios range from class imbalance between training and test data~\citep{galar2011review,japkowicz2000class} to algorithmic fairness~\citep{dwork2012fairness,barocas2016big}. Consequently, several recent formalisms have been proposed to develop algorithms for such settings, such as adversarial training~\citep{goodfellow2014explaining,madry2017towards,wang2019convergence}, Invariant Risk Minimization~\citep{arjovsky2019invariant,ahuja2020invariant,rosenfeld2020risks} and Distributionally Robust Optimization~\citep{duchi2021learning,namkoong2016stochastic,staib2019distributionally,kuhn2019wasserstein}. While the specifics differ across these formulations, at a high-level they all minimize the worst-case risk of the learned model across some family of test distributions, and spend considerable effort on finding natural classes of test distributions and designing efficient algorithms to optimize the worst-case objectives. 

In this paper, we consider this problem from a learning theory perspective, by asking if it is possible to obtain regret bounds that hold uniformly across all test distributions. Recall that in supervised learning, we are given a function class $\cF$ and a loss function $\ell(z,f(z))$, where $z$ denotes a sample. For a test distribution $P$, we are interested in learning a prediction function $f \in \cF$ using the training data, so that the risk of $f$ under $P$: 
$R_{P}(f)=\rE_{z \sim P} \ell(z,f(z))$,
is small. In classical learning theory, an estimator $\hat{f}$ is learned from the training data, and we are interested in bounding the regret (or excess risk) of $f$ as follows
\[
\regret_P(\hat{f})= R_{P}(\hat{f}) - \inf_{f \in \cF} R_P(f) = R_{P}(\hat{f}) - R_P(f_P),
\]
where $f_P$ minimizes $R_P(f)$ over $\cF$.
If the training data $(z_1,\ldots,z_n)$ are drawn from the same distribution as the test distribution $P$, then standard results imply that the empirical risk minimizaiton (ERM) method, which finds an $\hat{f}$ with a small training error, achieves small regret. A natural question we would like to ask in this paper is whether there exists an estimator that allows us to achieve small regret with distribution shift? More precisely, suppose we are given a family of distributions $\cP$ with $\dref \in \cP$. Given access to $n$ samples $(z_1,\ldots,z_n)$ from $\dref$, we would like to find an estimator $\hat{f}$ from the training data so that the uniform regret
\begin{equation}
\sup_{P \in \cP} \regret_P(\hat{f})= \sup_{P \in \cP}  \left[ R_{P}(\hat{f}) - \inf_{f \in \cF} R_P(f) \right] 
\label{eq:uniform-regret}
\end{equation}
is small.  Here the model class $\cF$ can be statistically misspecified in that it may not contain the optimal model $f^\star$ that attains the Bayes risk in a pointwise manner. 

We note that this objective is different from the objective of Distributionally Robust Optimization (henceforth \dro), which directly minimizes the risk across all test distributions as follows:
\begin{equation}
    f_{\dro} = \arginf_{f\in\cF} \sup_{P\in\cP}  R_P(f) .
    \label{eq:dro-obj}
\end{equation}
We observe that the objective~\eqref{eq:uniform-regret} is less sensitive than \dro to 
heterogeneity in the amount of noise in test distributions. Moreover, when the model is well-specified, our criterion directly measures the closeness of the estimator $\hat{f}$ and the optimal model $f^\star$, which is often desirable in applications. 
We call the criterion to minimize regret uniformly across test distributaions  \alglong (\alg),  and its population formulation seeks to minimize the worst-case regret~\eqref{eq:uniform-regret}:
\begin{equation}
    f_{\alg} = \arginf_{f\in\cF} \sup_{P\in\cP}  \regret_P(f) .
    \label{eq:minimax-pop}
\end{equation}
Compared to \dro, \alg evaluates the regret of a candidate model $f$ on each distribution $P\in\cP$ as opposed to the raw risk. As we show in the following sections, regret is comparable across distribution in a more robust manner than the risk, since the subtraction of the minimum risk for each distribution takes away the variance due to noise. We note that a similar approach to remove the residual variance to deal with heterogenous noise arises in the offline Reinforcement Learning setting~\citep{antos2008learning}, which we comment more on in the discussion of related work.
 Similar to ERM, which replaces the regret minimization problem over a single test distribution by its empirical minimization counterpart, we consider an empirical minimization counterpart of \alg, and analyze its generalization performance. 

\paragraph{Our contributions.} With this context, our paper makes the following contributions.

\begin{enumerate}[leftmargin=*,itemsep=0pt]
    \item The objective~\eqref{eq:minimax-pop}, which gives an alternative to \dro, alleviates its shortcomings related to noise overfitting, and provides a robust learning formulation for problems with distribution shift.
    \item We show the empirical couterpart to \eqref{eq:minimax-pop} can be analyzed using classical tools from learning theory.
    When the loss function is bounded and Lipschitz continuous, we show that the regret of our learned model $\hat f_{\alg}$ on each distribution $P$ can be bounded by that of $f_{\alg}$ along with a deviation term that goes down as $\order(1/\sqrt{n})$. Concretely, we show that if $dP(z)/d\dref(z)\leq B$ for all $P\in\cP$, then with high probability, 
    \begin{equation*}
        \sup_{P\in\cP} \regret_P(\fhat_{\alg})  \leq  \inf_{f\in\cF}\sup_{P\in\cP} \regret_P(f)  + \order\left(\frac{B}{\sqrt{n}}\right),
    \end{equation*}
    where we omit the dependence on the complexities of $\cF$ and $\cP$ as well as the failure probability and use a simplified assumption on $dP/dQ$ than in our formal result of Theorem~\ref{thm:slow} for ease of presentation. That is, $\fhat_{\alg}$ has a bounded regret on each domain, so long as there is at least one function with a small regret on each domain. 
    For squared loss, if the class $\cF$ is convex, we show that our rates improve to $\order(B/n)$ (Theorem~\ref{thm:fast}). The result follows naturally as empirical regret concentrates to population regret at a fast rate in this setting. We also show that the worst-case risk of $\fhat_{\dro}$ cannot approach the worst case risk of $f_{\dro}$ at a rate faster than $1/\sqrt{n}$ (Proposition~\ref{prop:dro-slow}), which demonstrates a theoretical benefit of our selection criterion.
    
    \item We present \algscale, an adaptation of our basic technique which further rescales the regret of each distribution differently. We show that \algscale retains the same worst-case guarantees as \alg, but markedly improves upon it when the distributions $P\in \cP$ satisfy an \emph{alignment} condition, which includes all well-specified settings (Theorems~\ref{thm:slow-scale} and~\ref{thm:fast-scale}). The rescaling parameters are fully estimated from data. 
    
    \item Algorithmically, we show that our method can be implemented using access to a (weighted) ERM oracle for the function class $\cF$, using the standard machinery of solving minimax problems as a two player game involving a no-regret learner and a best response player. The computational complexity scales linearly in the size of $\cP$ and as $n^2$ in the number of samples. 
\end{enumerate}

We conclude this section with a discussion of related work.

\subsection{Related Work}

Distributional mismatch between training and test data has been studied in many settings (see e.g.~\citep{quinonero2009dataset}) and under several notions of mismatch. A common setting is that of covariate shift, where only the distribution of labels can differ between training and test~\citep{shimodaira2000improving,huang2006correcting,bickel2007discriminative}. Others study changes in the proportions of the label or other discrete attributes between training and test~\citep{dwork2012fairness,xu2020class}. More general shifts are considered in domain adaptation~\citep{mansour2009domain,ben2010theory,patel2015visual} and transfer learning~\citep{pan2009survey,tan2018survey}. 

A recent line of work on Invariant Risk Minimization (IRM)~\citep{arjovsky2019invariant} considers particularly broad generalization to unseen domains from which no examples are available at the training time,  motivated by preventing the learning of spurious features in ERM based methods. The idea is to find an estimator which is invariant across all potential test distributions. Although most empirical works on IRM use gradient-penalty based formulations for algorithmic considerations, one interpretation of IRM leads to a minimax formulation similar to the \alg objective~\eqref{eq:minimax-pop}:
\begin{equation}
    f_{\text{IRM}} = \arginf_{f\in\cF} R_{\dref}(f) \quad \mbox{s.t.} \sup_{P \in \cP} R_P(f) - R_P(f_P) \leq \epsilon,
    \label{eq:irm}
\end{equation}
where the family $\cP$ consists of all individual domains in the training data and $\dref$ is the entire training data pooled across domains. However, it is not obvious how to meaningfully analyze the distributional robustness of IRM method \eqref{eq:irm} in the classical learning theory setting, 
 and some subsequent works works~\citep{ahuja2020empirical,kamath2021does} formalize IRM as minimizing a \dro style worst-case risk objective, instead of regret as in the original paper. In contrast, our \alg formulation is motivated by extending regret-driven reasoning to settings with distribution shift. 

Distributionally robust optimization, which forms a starting point for this work, has a long history in the optimization literature (see e.g.~\citep{ben2009robust,shapiro2017distributionally}) and has gained recent prominence in the machine learning literature~\citep{namkoong2016stochastic,duchi2021learning,duchi2021statistics,staib2019distributionally,kuhn2019wasserstein,zhu2020kernel}. DRO has been applied with mixed success in language modeling~\citep{oren2019distributionally}, correcting class imbalance~\citep{xu2020class} and group fairness~\citep{hashimoto2018fairness}. For most of the theoretical works, the emphasis is on efficient optimization of the worst-case risk objective. While \citet{duchi2021learning} provide sharp upper and lower bounds on the risk of the \dro estimator, they do not examine regret, which is the central focus of our study.

We note that the raw risk used in \dro is sensitive to heterogeneous noise, in that larger noise leads to larger risk. Such sensitivity can be undesirable for some applications. 
Challenges in learning across scenarios with heterogeneous noise levels has been previously studied in supervised learning~\citep{crammer2005learning}, and also arise routinely in reinforcement learning (RL). In the setting of offline RL, a class of techniques is designed to minimize the Bellman error criterion~\citep{bertsekas1996neuro,munos2008finite}, which has known challenges in direct unbiased estimation (see e.g. Example 11.4 in~\citet[][]{sutton1998reinforcement}). To counter this, \citet{antos2008learning} suggest removing the residual variance, which is akin to considering regret instead of risk in our objective and yields similar minimax objectives as this work. 

On the algorithmic side, we build on the use of regret minimization strategies for solving two player zero-sum games, pioneered by~\citet{freund1996game} to allow for general distribution families $\cP$ as opposed to relying on closed-form maximizations for specific families, as performed in many prior works~\citep{namkoong2016stochastic,staib2019distributionally,kuhn2019wasserstein}. Similar approaches have been studied in the presence of adversarial corruption in~\citet{feige2015learning} and in the context of algorithmic fairness for regression in~\citet{agarwal2019fair}.

\section{Setting}
\label{sec:setting}
We consider a class of learning problems where the learner is given a dataset sampled from a distribution $\dref$ of the form $z_1,\ldots,z_n$ with $z_i\in\cZ$. We also have a class $\cP$ of distributions such that we want to do well relative to this entire class of distributions, even though we only have access to samples from $\dref$. In \alglong, we formalize the property of doing uniformly well for all $P\in \cP$ through the objective~\eqref{eq:minimax-pop}. As mentioned in the introduction, this is closely related to the \dro objective~\eqref{eq:dro-obj}. 
The following result shows that small risk does not lead to small regret, which means \dro does not solve  the \alg objective~\eqref{eq:minimax-pop}. The construction also shows that  under heterogenous noise,
\dro tends to focus on a distribution $P\in\cP$ with large noise level, which can be undesirable for many applications.

\begin{proposition}[\dro is sensitive to noise heterogeneity]
There exists a family of two distributions $\cP=\{P_1,P_2\}$ with heterogeneous noise levels,
and a function class $\cF=\{f_1,f_2\}$, so that \\$\sup_{P\in\cP}\regret_P(f_{\dro}) = 0.21$, while $\sup_{P\in\cP}\regret_P(f_{\alg}) = 0.03$. 
\label{prop:dro-noise}
\end{proposition}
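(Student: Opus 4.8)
The statement is an existence claim, so the plan is to exhibit an explicit two-distribution, two-function instance and verify both quantities by direct computation. Since $\cF = \{f_1, f_2\}$ and $\cP = \{P_1, P_2\}$, everything is governed by the $2\times 2$ risk matrix $R_{ij} := R_{P_i}(f_j)$. Here $f_{P_i}$ is simply the function of smaller risk, so the regret of $f_j$ on $P_i$ is $R_{ij} - \min_k R_{ik}$. The \dro rule~\eqref{eq:dro-obj} selects the $j$ minimizing $\max_i R_{ij}$, whereas the \alg rule~\eqref{eq:minimax-pop} selects the $j$ minimizing $\max_i (R_{ij} - \min_k R_{ik})$. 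The whole task reduces to choosing the four entries $R_{ij}$ so that these two rules disagree, with the two worst-case regrets equal to $0.21$ and $0.03$.

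The qualitative mechanism I would engineer is the one described before the statement: make $P_1$ a high-noise distribution on which both functions incur large, nearly equal risk, and $P_2$ a low-noise distribution. Concretely I aim for $R_{11}=0.5$, $R_{12}=0.47$ (so $f_2$ is marginally better on the noisy $P_1$) and $R_{21}=0.02$, $R_{22}=0.23$ (so $f_1$ is nearly optimal on the clean $P_2$ while $f_2$ is far off). Because the large noise floor of $P_1$ dominates the worst-case \emph{risk}, \dro prefers $f_2$ (its worst risk $0.47$ beats $f_1$'s $0.5$), thereby inheriting the large regret $0.23 - 0.02 = 0.21$ on $P_2$; meanwhile \alg, which subtracts each distribution's floor, sees that $f_1$ has worst-case regret only $\max(0.5-0.47,\,0) = 0.03$ and selects it. This yields $\sup_{P\in\cP}\regret_P(f_{\dro}) = 0.21$ and $\sup_{P\in\cP}\regret_P(f_{\alg}) = 0.03$.

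To realize this matrix with genuine distributions I would use a one-dimensional regression instance with squared loss and constant predictors $f_1 \equiv c_1$, $f_2 \equiv c_2$, so that $R_{P_i}(f_j) = \sigma_i^2 + (\mu_i - c_j)^2$, where $\mu_i$ and $\sigma_i^2$ are the mean and variance of the label under $P_i$; the variance $\sigma_i^2$ is precisely the irreducible noise, making the noise-heterogeneity interpretation explicit. Solving the four equations leaves ample freedom; one clean solution is $c_1 = 0$, $c_2 = \sqrt{0.21}$, $\mu_2 = 0$, $\sigma_2^2 = 0.02$, together with $\mu_1 = 0.12/\sqrt{0.21}$ and $\sigma_1^2 = 0.5 - \mu_1^2 \approx 0.431$, which one checks reproduces $(R_{11},R_{12},R_{21},R_{22}) = (0.5,0.47,0.02,0.23)$. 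Any label laws (e.g.\ bounded two-point distributions) with these means and variances complete the construction.

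The one genuine obstacle is \emph{feasibility}: not every $2\times2$ matrix of target risks is realizable, since the entries must arise from honest variances $\sigma_i^2 \ge 0$ and from squared distances $(\mu_i - c_j)^2$ among collinear points. I would flag that Bernoulli labels do not suffice here, because there the risk is affine in the success probability ($\sigma^2 = p(1-p)$ is tied to the mean), which over-constrains the system and makes these target values infeasible; decoupling the mean and variance by allowing general bounded labels restores the needed degrees of freedom, and the only remaining check is that the solved variances are nonnegative, which the displayed solution satisfies.
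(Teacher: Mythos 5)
Your proposal is correct and takes essentially the same approach as the paper: an explicit two-distribution, two-function squared-loss instance in which one distribution's large irreducible noise dominates the worst-case risk, so \dro picks the function that is marginally better on the noisy distribution and thereby incurs regret $0.21$ on the other, while \alg subtracts each noise floor and picks the function with worst-case regret $0.03$. The paper's concrete instance (a point mass at $0.1$, a $\mathrm{Ber}(0.5)$, and constants $0.3$, $0.6$) realizes the same $0.21$ vs.\ $0.03$ values with the roles of the noisy and clean distributions swapped relative to yours, but the mechanism and verification are identical.
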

\begin{proof}
Suppose $\cP$ consists of two distributions over $z\in[0,1]$. The first distribution $P_1$ is degenerate with $P_1(0.1) = 1$. The second distribution $P_2 = \text{Ber}(0.5)$. The goal is to estimate the mean of the distributions using least squares loss. In this example, $P_1$ has zero-noise and $P_2$ has a relatively high noise.
We consider a function class $\cF$ consisting of two estimates of the mean, $f_1 = 0.3$ and $f_2 = 0.6$ and $\ell(z,f) = (f - z)^2$. Let us abbreviate $R_{P_i}(f) = R_i(f)$ for $i\in\{1,2\}$.
Note that the true mean of $P_1$ is $0.1$, and that of $P_2$ is $0.5$. Thus $f_1$ is a better uniform estimate of the means. 

It is easy to verify that $R_1(f_1) = 0.04$, $R_1(f_2) = 0.25$, $R_2(f_1) = 0.29$, and $R_2(f_2) = 0.26$. Then the \dro objective~\eqref{eq:dro-obj} evaluates to the optimal value of 0.26, with $f_2$ attaining this value. On the other hand, the \alg objective~\eqref{eq:minimax-pop} is equal to $0.03$ and this value is attained for $f_1$, while $f_2$ incurs a much higher objective of $\sup_P R_P(f_2) - R_P(f_P) = 0.21$. As a result, we have
    $f_{\dro} = f_2$ and $f_{\alg} = f_1$. 
Comparing regrets lets \alg remove the underlying noise level of $0.25$ for $P_2$.
\end{proof}

Note that while the worst-case regret of \alg is always better than that of \dro, this does not imply that the function $f_{\alg}$ is always preferable under each distribution $P\in\cP$ to $f_{\dro}$. Next we give one such example where $f_{\dro}$ achieves an arguably better trade-off in balanced performance across distributions. This situation can happen in real applications when functions achieving minimum risks vary greatly across distributions, potentially due to overfitting. The example shows that 
 both \alg and \dro may have advantages or disadvantages under different conditions. 

\begin{example}[Heterogeneous regrets across domains]
Suppose the target class $\cP$ consists of two distributions over $z\in[0,1]$ and our function class $\cF$ consists of 3 functions $\{f_1, f_2, f_3\}$. Let us assume that the risks of these functions under the two distributions are given by:
\begin{align*}
    R_1(f_1) = 0,~~R_1(f_2) = 0.5,~~R_1(f_3) = 0.5+\epsilon,~~R_2(f_1) = 1, R_2(f_2) = 0.9,~~\mbox{and}~~R_2(f_3) = 0.4.
\end{align*}
Then $f_1$ is easily disregarded under both \alg and \dro, since it has poor performance under $P_2$. For both $f_2$ and $f_3$, their regrets on $P_1$ are larger than $P_2$, so \alg selects the better function for $P_1$, that is, $f_{\alg} = f_2$. \dro, on the other hand, prefers $f_3$ as it has a smaller worst-case risk, which is arguably the right choice in this scenario. 
\label{ex:mro-bad}
\end{example}
More generally, one can always find a distribution $P\in\cP$ under which the regret of $f_{\dro}$ is smaller than that of $f_{\alg}$ and vice versa, but the ordering in the worst-case is clear by definition. In the context of these observations, we next discuss how we might estimate $f_{\alg}$ from samples, for which it is useful to rewrite the objective~\eqref{eq:minimax-pop} in an equivalent weight-based formulation we discuss next. 
\paragraph{A weight-based reformulation.}
Having shown the potential benefits of our population objective, we now consider how to optimize it using samples from $\dref$. Notice that the objective~\eqref{eq:minimax-pop} does not depend on $\dref$ explicitly, which makes it unclear how we should approach the problem given our dataset. We address this issue by adopting an equivalent reweighting based formulation as typically done in \dro. Concretely, let us assume that $P$ is absolutely continous with respect to $\dref$ for all $P\in\cP$, so that there exists
a weighting functions $w~:~\cZ\to\R_+$ such that
$d P(z) = w(z) d\dref(z)$, where $\rE_{\dref}[w] = 1$.
We can equivalently rewrite the objective~\eqref{eq:minimax-pop} as

\begin{align}
    f_{\alg} &:= \arginf_{f \in \cF}\sup_{w\in\cW}\left\{ \regret_w(f)
:=       R_w(f) - \inf_{f'\in \cF} R_w(f') = \ R_w(f) - R_w(f_w)\right\},\label{eq:minimax-pop-w}
\end{align}
where $\cW = \{w~:~w(z) = dP(z)/d\dref(z) ~~\text{for}~~P\in\cP\}$ and $R_w(f) = \rE_{z\sim P_0}[w(z)\ell(z,f(z))]$. It is also straightforward to define an empirical counterpart for this objective. Given $n$ samples $z_1,\ldots,z_n \sim \dref$, we can define the (weighted) empirical risk and its corresponding minimizer as $\Rhat_w(f) = \frac{1}{n}\sum_{i=1}^n w(z_i) \ell(z_i,f(z_i))$ and $\fhat_w = \arginf_{f\in\cF} \Rhat_w(f)$. With these notations, a natural empirical counterpart of the population objective~\eqref{eq:minimax-pop-w} can be written as

\begin{equation}
    \fhat_{\alg} := \arginf_{f \in \cF}\sup_{w \in \cW} \big[\Rhat_w(f) - \inf_{f'\in \cF} \Rhat_w(f') \big]= \arginf_{f \in \cF}\sup_{w \in \cW} \big[\Rhat_w(f) - \Rhat_w(\fhat_w)\big],
    \label{eq:minimax-emp-w}
\end{equation}

We now give a couple of concrete examples to instantiate this general formulation.

\begin{example}[Covariate shift in supervised learning]
Suppose the samples $z = (x,y)$ where $x\in \R^d$ are features and $y\in\R$ is a prediction target. Suppose that the class $\cW$ contains functions $\cW = \{w(z) = w_\theta(x)~:~\theta\in\Theta\}$, where $\Theta$ is some parameter class. That is, we allow only the marginal distribution of $x$ to change while the conditional $\dref(y|x)$ is identical across test distributions~\citep{shimodaira2000improving,huang2006correcting}. In our formulation, it is easy to incorporate the covariate shift situation with a general class $\cW$ under the  restrictions $w_\theta(x) \leq B$. It is possible to add additional regularity on the class $\Theta$, such as taking the unit ball in an RKHS~\citep{huang2006correcting,staib2019distributionally}.
We note that most prior studies of \dro imposed conditions 
on the joint perturbations of both $(x,y)$, which leads to closed form solutions. 
However, directly instantiating $\cW$ in \dro  with a bounded $f$-divergence or MMD perturbations for only the marginal $\dref(x)$
(instead of the joint distribution of $(x,y)$) does not yield a nice closed form solution.  This results in additional computational and statistical challenges as discussed in~\citet{duchi2019distributionally}.
\label{ex:covar-shift}
\end{example}

We now demonstrate the versatility of our setting with an example from reinforcement learning.

\begin{example}[Offline contextual bandit learning]
In offline contextual bandit (CB) learning, the data consists of $z = (x,a,r)$ with $x\in\R^d$ denoting a context, $a\in[K]$ being an action out of $K$ possible choices and $r\in[0,1]$ a reward. There is a fixed and unknown joint distribution $\dref$ over $(x,r(1),\ldots,r(K))$. The object of interest is a decision policy $\pi$ which maps a context $x$ to a distribution over actions, and we seek a policy which maximizes the expected reward under its action choices: $\pi^\star = \argmax_{\pi\in\Pi} \rE[r(a)~|~a\sim\pi(\cdot|x),x]$, where $\Pi$ is some given policy class. In the offline setting, there is some fixed policy $\mu$ which is used to choose actions during the data collection process. Then the training data can be described as $z = (x,a,r)$ where $x\sim \dref$, $a\sim \mu(\cdot | x)$ and $r\sim \dref(\cdot | x,a)$, and this differs from the action distributions that other policies in $\Pi$ induce. Existing literature on this problem typically creates an estimator $\eta$ for $\rE[r | x,a]$ and outputs $\pi = \argmax_{\pi\in\Pi} \sum_{i=1}^n \sum_a \pi(a | x_i)\eta(r|a,x_i)$, and the quality of the resulting policy critically relies on the quality of the estimator $\eta$. A particularly popular choice is the class of \emph{doubly robust estimators}~\citep{cassel1976some,dudik2014doubly} which solve a (weighted) regression problem to estimate $\eta \approx \argmin_f \sum_i \tfrac{\pi(a_i|x_i)}{\mu(a_i|x_i)}(f(x_i,a_i) - r_i)^2$. Since we require the reward model to predict well on the actions selected according to different policies $\pi\in\Pi$, the ideal regression objective for any policy $\pi$ reweights the data as per that policy's actions, and doing this weighting results in a significant performance gain~\citep{su2020doubly,farajtabar2018more}. Typically this weighting is simplified or skipped during policy learning, however, as one needs to simultaneously reweight for all policies $\pi\in\Pi$, and \alg provides an approach to do exactly this by choosing $\cW = \{w(x,a,r) = \pi(a\mid x)/\mu(a\mid x)~:~\pi\in\Pi\}$ in the optimization for estimating $\eta$.
\end{example}

\paragraph{Boundedness and complexity assumptions} We now make standard technical assumptions on $\cF$, the loss function $\ell$ and the weight class $\cW$. To focus attention on the essence of the results, we omit relatively complex results such as chaining or local Rademacher complexity, and use an $\ell_\infty$- covering for uniform convergence over $\cF$, where we use $N(\epsilon, \cF)$ to denote the $\ell_\infty$- covering number of $\cF$ with an accuracy $\epsilon$. For parametric classes, such an analysis is optimal up to a log factors. We also make a boundedness and Lipschitz continuity assumption on the loss function.

\begin{assumption}[Bounded and Lipschitz losses]
The loss function $\ell(z,v)$ is bounded in $[0,1]$ for all $z\in\cZ$ and $v \in \{f(z)~:~f\in\cF, z\in\cZ\}$ and is $L$-Lipschitz continuous with respect to $v$.
\label{ass:loss-bdd}
\end{assumption}

We also assume that the weight class satisfies a boundedness condition.

\begin{assumption}[Bounded importance weights]
All weights $w\in\cW$ satisfy \mbox{$w(z)\leq B_w\leq B$} for all $z\in\cZ$ and some constant $B \geq 1$.
\label{ass:weight-bdd}
\end{assumption}

To avoid the complexity of introducing an additional covering number, throughout this paper, we assume that the weight class $\cW$ is of finite cardinality, which is reasonable for many domain adaptation applications. We adopt the notations $d_{\cF}(\delta) = 1+ \log\frac{N(1/(nLB),\cF)}{\delta}$ and $d_{\cF,\cW}(\delta) = d_{\cF}(\delta) + \ln(|\cW|/\delta)$ to jointly capture the complexities of $\cF$ and $\cW$, given a failure probability $\delta$.

\section{Regret Bounds for Bounded, Lipschitz Losses}
\label{sec:theory-slow}

We begin with the most general case with fairly standard assumptions on the loss function, and the function and weight classes and show the following result on the regret of $\fhat_{\alg}$ for any $w\in\cW$.

\begin{theorem}
    Under assumptions~\ref{ass:loss-bdd} and~\ref{ass:weight-bdd}, suppose further that $\rE_{\dref}[w^2] \leq \sigma_w^2$ for any $w\in\cW$. Then with probability at least $1-\delta$, we have $\forall w\in\cW$:
    \begin{align*}
        \regret_w(\fhat_{\alg}) \leq \inf_{f\in\cF} \sup_{w'\in\cW} \regret_{w'}(f)  + \sup_{w' \in \cW} \underbrace{\order\bigg(\sqrt{\frac{\sigma_{w'}^2 d_{\cF,\cW}(\delta)}{n}} + \frac{B_{w'} d_{\cF,\cW}(\delta)}{n}\bigg)}_{\epsilon_{w'}}.
    \end{align*}
    \label{thm:slow}
\end{theorem}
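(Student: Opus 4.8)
The plan is to reduce the theorem to a single uniform two-sided concentration statement for the weighted empirical risk, and then chain a short sequence of elementary inequalities that exploit the optimality of $\fhat_{\alg}$ for the empirical objective and of the per-weight empirical minimizers $\fhat_w$. Concretely, the key lemma I would establish is that, with probability at least $1-\delta$, simultaneously for every $f\in\cF$ and every $w\in\cW$,
\[
|R_w(f) - \Rhat_w(f)| \leq c\,\epsilon_w
\]
for a small absolute constant $c$. To prove this I fix $f$ and $w$ and apply Bernstein's inequality to the i.i.d.\ terms $w(z_i)\ell(z_i,f(z_i))$, which lie in $[0,B_w]$ and have variance at most $\rE_{\dref}[w^2\ell^2]\leq \rE_{\dref}[w^2]\leq \sigma_w^2$ (using $\ell\in[0,1]$ by Assumption~\ref{ass:loss-bdd}). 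This yields exactly the two pieces $\sqrt{\sigma_w^2\log(1/\delta)/n}$ and $B_w\log(1/\delta)/n$ that define $\epsilon_w$. I then take a union bound over the finite weight class $\cW$ (contributing the $\ln(|\cW|/\delta)$ term) and over an $\ell_\infty$-cover of $\cF$ at resolution $1/(nLB)$; the $L$-Lipschitz, $B_w$-bounded structure of $w\,\ell$ converts the cover into uniform control over all of $\cF$ with a discretization error of order $1/n$ that is absorbed into $\epsilon_w$, and contributes the $\log N(1/(nLB),\cF)$ term. This is precisely the role of the combined complexity $d_{\cF,\cW}(\delta)$.

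Given this uniform bound, the regret guarantee follows from a minimax sandwich. Write $f^\star = f_{\alg}$ for the population minimizer. First, since the bound holds at both $f_w$ and $\fhat_w$, the optimality of each (as minimizer of $R_w$, resp.\ $\Rhat_w$) shows that the population and empirical infima agree, $|R_w(f_w) - \Rhat_w(\fhat_w)|\leq c\,\epsilon_w$; combining this with the bound at $\fhat_{\alg}$ gives $\regret_w(\fhat_{\alg}) \leq \regh_w(\fhat_{\alg}) + 2c\,\epsilon_w$. Next, by the defining optimality of $\fhat_{\alg}$ in the empirical objective~\eqref{eq:minimax-emp-w}, $\regh_w(\fhat_{\alg})\leq \sup_{w'}\regh_{w'}(\fhat_{\alg}) = \inf_f \sup_{w'}\regh_{w'}(f) \leq \sup_{w'}\regh_{w'}(f^\star)$. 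Finally, applying the uniform bound once more, now at $f^\star$ and at each $\fhat_{w'}$, converts empirical regret back to population regret: $\sup_{w'}\regh_{w'}(f^\star) \leq \sup_{w'}\regret_{w'}(f^\star) + \sup_{w'} 2c\,\epsilon_{w'} = \inf_f\sup_{w'}\regret_{w'}(f) + \sup_{w'} 2c\,\epsilon_{w'}$. Chaining the three displays and using $\epsilon_w\leq \sup_{w'}\epsilon_{w'}$ yields the claim, with the constants $c$ folded into the $\order(\cdot)$ defining $\epsilon_{w'}$.

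The main obstacle I anticipate is not any single concentration step but the fact that $\regret$ subtracts a \emph{data-dependent} infimum: the empirical minimizer $\fhat_w$ generally differs from the population minimizer $f_w$, so one cannot directly compare $\regret_w$ and $\regh_w$ term-by-term. The resolution above is that uniform convergence controls the deviation at \emph{every} $f$, including both minimizers, so the optimality of $\fhat_w$ (resp.\ $f_w$) can be used to sandwich $\Rhat_w(\fhat_w)$ between $R_w(f_w)\pm c\,\epsilon_w$; the only real care needed is to track the directions of these inequalities. The second delicate point is obtaining the variance-dependent leading rate $\sqrt{\sigma_w^2\, d_{\cF,\cW}(\delta)/n}$ rather than the cruder $B_w\sqrt{d_{\cF,\cW}(\delta)/n}$: this forces a Bernstein-type (rather than Hoeffding-type) bound, which is exactly what lets heavy but rare reweightings enter the dominant term only through $\sigma_w^2=\rE_{\dref}[w^2]$ and relegates the worst-case bound $B_w$ to the lower-order $1/n$ term.
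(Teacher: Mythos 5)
Your proposal is correct and follows essentially the same route as the paper's proof: a Bernstein bound on $w(z)\ell(z,f(z))$ made uniform via an $\ell_\infty$-cover of $\cF$ and a union bound over $\cW$, followed by sandwiching $\Rhat_w(\fhat_w)$ against $R_w(f_w)$ and chaining through the empirical optimality of $\fhat_{\alg}$. The constants and the handling of the data-dependent infimum match the paper's argument.
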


The bound of Theorem~\ref{thm:slow} highlights the main difference of our objective compared with DRO style approaches. The bound states that we our solution $\fhat_{\alg}$ has a small regret \emph{for each $w\in\cW$}, as long as at least one such function exists in the function class, up to the usual finite sample deviation terms. Thus, $\fhat_{\alg}$ attains the uniform regret guarantee we asked for. To better interpret our result, we state an easy corollary, before making some additional remarks.

\begin{corollary}
Under conditions of Theorem~\ref{thm:slow}, suppose further that $\exists f^\star \in \cF$ such that $R_w(f^\star) \leq R_w(f_w) + \tilde{\epsilon}_w$ for all $w\in\cW$. Then with probability at least $1-\delta$, we have
\begin{align*}
    \forall w\in \cW~:~R_w(\fhat_{\alg}) \leq R_w(f^\star) + \underbrace{\sup_{w'\in\cW} \tilde{\epsilon}_{w'} + \sup_{w'\in\cW} \epsilon_{w'}}_{\epsilon_\cW}.
\end{align*}
\label{cor:slow-samemin}
\end{corollary}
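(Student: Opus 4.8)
The plan is to derive this corollary directly from Theorem~\ref{thm:slow} by controlling the population minimax term $\inf_{f\in\cF}\sup_{w'\in\cW}\regret_{w'}(f)$ using the hypothesized near-optimal function $f^\star$, and then translating the resulting regret bound into a risk bound via the definition of regret.

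First I would invoke Theorem~\ref{thm:slow}, which on a single event of probability at least $1-\delta$ gives, simultaneously for every $w\in\cW$,
\[
\regret_w(\fhat_{\alg}) \leq \inf_{f\in\cF}\sup_{w'\in\cW}\regret_{w'}(f) + \sup_{w'\in\cW}\epsilon_{w'}.
\]
Since the entire argument stays on this event, no additional union bound is needed. Next I would upper bound the population minimax regret by evaluating the inner supremum at the specific candidate $f^\star\in\cF$: because the infimum over $\cF$ is at most the value attained at $f^\star$, we have $\inf_{f\in\cF}\sup_{w'\in\cW}\regret_{w'}(f) \leq \sup_{w'\in\cW}\regret_{w'}(f^\star)$. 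The corollary's hypothesis $R_w(f^\star)\leq R_w(f_w)+\tilde{\epsilon}_w$ is exactly the statement that $\regret_w(f^\star)=R_w(f^\star)-R_w(f_w)\leq\tilde{\epsilon}_w$ for every $w$, so $\sup_{w'\in\cW}\regret_{w'}(f^\star)\leq\sup_{w'\in\cW}\tilde{\epsilon}_{w'}$. Chaining these bounds yields $\regret_w(\fhat_{\alg})\leq\sup_{w'\in\cW}\tilde{\epsilon}_{w'}+\sup_{w'\in\cW}\epsilon_{w'}=\epsilon_\cW$, uniformly in $w$.

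Finally I would unfold the definition $\regret_w(\fhat_{\alg})=R_w(\fhat_{\alg})-R_w(f_w)$ to obtain $R_w(\fhat_{\alg})\leq R_w(f_w)+\epsilon_\cW$. Because $f_w$ minimizes $R_w$ over $\cF$ and $f^\star\in\cF$, we have $R_w(f_w)\leq R_w(f^\star)$, and substituting gives $R_w(\fhat_{\alg})\leq R_w(f^\star)+\epsilon_\cW$, which is the claim. There is no genuinely hard step here; the only points requiring care are that the high-probability event is shared across all $w\in\cW$ (inherited directly from Theorem~\ref{thm:slow}), and that the final bound is phrased relative to the fixed benchmark $f^\star$ rather than the $w$-dependent minimizer $f_w$, with the gap between the two absorbed for free by the optimality of $f_w$. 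The content of the corollary is thus essentially an interpretive restatement: it records that whenever a single function $f^\star$ is near-optimal across all reweightings, $\fhat_{\alg}$ competes with it in raw risk on every distribution.
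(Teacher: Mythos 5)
Your proof is correct and is exactly the intended derivation: the paper states this as ``an easy corollary'' of Theorem~\ref{thm:slow} without writing out the argument, and your chain --- bound the population minimax term by $\sup_{w'}\regret_{w'}(f^\star)\leq\sup_{w'}\tilde{\epsilon}_{w'}$, then convert regret to risk using $R_w(f_w)\leq R_w(f^\star)$ --- is the natural way to fill it in. No gaps.
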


\paragraph{Comparison with DRO approaches.} While Proposition~\ref{prop:dro-noise} already illustrates the improved robustness of \alg to differing noise levels across target distributions, it is further instructive to compare the guarantees for the two approaches. Concretely, in our setting, it can be shown that with probability at least $1-\delta$, we have for all $w\in\cW$:
\begin{align}
    R_w(\fhat_{\dro}) \leq \inf_{f\in\cF}\sup_{w'\in\cW} R_{w'}(f) + \sup_{w' \in \cW} \epsilon_{w'}.
    \label{eq:dro-slow}
\end{align}
Compared with Theorem~\ref{thm:slow}, this bound can be inferior when the different distributions identified by $w\in\cW$ have vastly different noise levels. For instance, in the setting of Corollary~\ref{cor:slow-samemin}, the bound~\eqref{eq:dro-slow} yields $R_w(f_{\dro}) \leq \sup_{w\in\cW} R_w(f^\star) + \epsilon_\cW$, which has additional dependence on the worst case risk of $f^\star$ across all the $w\in\cW$. We can now recall that the assumption of Corollary~\ref{cor:slow-samemin} is satisfied with small $\epsilon_\cW$ in the constructed example of Proposition~\ref{prop:dro-noise}, in which \dro tries to fit to the large noise. Similarly, if the data distribution is Gaussian with an identical mean but different variances across the $w$, then we see that the two bounds reduce to:
\begin{align*}
\textstyle    R_w(\fhat_{\alg}) \leq \text{Var}_w +  \epsilon_{\cW}\quad\mbox{and}\quad R_w(\fhat_{\dro}) \leq \sup_{w'\in\cW} \text{Var}_{w'} +   \epsilon_{\cW},
\end{align*}
where $\text{Var}_w$ is the variance of the Gaussian corresponding to importance weights $w$. Overall, these examples serve to illustrate that the \dro objective is reasonable when the different distributions have similar noise levels, but is not robust to heterogeneity in noise levels. 

\paragraph{Dependence on the complexity of $\cW$.} For discrete $\cW$, our results incur a union bound penalty of $\ln|\cW|$. We suspect that this dependency is unavoidable in the general case.
If we define $\cW$ using a bound constraint such as $w(z) \leq B$, or an $f$-divergence based constraint as in the DRO literature, then the worst-case $w$  has a simple solution in the \dro setting, with no explicit dependence on the complexity of $\cW$. These results are easily adapted to \alg as well by observing  
    $\sup_{w\in\cW} \regret_w(f) = \sup_{w\in\cW}\sup_{f'\in\cF} R_w(f) - R_w(f')$.
For instance, if $\cW_B = \{w~:~0 \leq w(z) \leq B, \rE_{\dref}[w] = 1\}$ is the class of all bounded importance weights, then the results of~\citet{shapiro2017distributionally} adapted to \alg imply that
\begin{align}
\textstyle   \sup_{w\in\cW_B} \regret(f) = \sup_{f'\in\cF}\inf_{\eta\in\R} \left\{\eta + B \rE_{\dref}\left[\left(\ell(f) - \ell(f') -\eta\right)_+\right]\right\}. 
    \label{eq:mro-bdd}
\end{align} 
Similar arguments hold for the family of Cressie-Read divergences studied in~\citet{duchi2021learning}. However, these closed form maximizations rely on doing joint perturbations over $(x,y)$ in the supervised learning setting and do not apply to covariate shifts~\citep{duchi2019distributionally}.

\begin{proof}[Sketch of the proof of Theorem~\ref{thm:slow}]
    For a fixed $f$, we can bound $|\Rhat_w(f) - R_w(f)|$ using Bernstein's inequality. This requires bounding the range and variance of the random variable $w(z)\ell(z,f(z))$. Since the losses are bounded in $[0,1]$, the two quantities are bounded by $\sigma_w^2$ and $B_w$ respectively. Assumption~\ref{ass:loss-bdd} allows us to further get a uniform bound over $f\in\cF$ with an additional dependence on $d_{\cF}(\delta)$. Standard arguments now yield closeness of $\Rhat_w(f) - \Rhat_w(\fhat_w)$ to $R_w(f) - R_w(f_w)$ simultaneously for all $f\in\cF$ and $w\in\cW$, and utilizing the definition of $\fhat_{\alg}$  as the minimizer of $\sup_{w\in\cW}\Rhat_w(f) - \Rhat_w(\fhat_w)$ completes the proof.
\end{proof}

\section{Fast Rates for Squared Loss and Convex Classes}
\label{sec:theory-fast}

A limitation of Theorem~\ref{thm:slow} is that it does not leverage any structure of the loss function beyond Assumption~\ref{ass:loss-bdd}, and as a result obtains a $1/\sqrt{n}$ dependence on the number of samples. For the case of a fixed distribution, it is well known that self-bounding properties of the loss variance can be used to obtain faster $1/n$ bounds on the regret of the ERM solution, and here we show that this improvement extends to our setting. For ease of exposition, we specialize to the squared loss setting in this section. That is, our samples $z$ take the form $(x,y)$ with $x\in\cX\subseteq \R^d$, $y\in[-1,1]$ , our functions $f$ map from $\cX$ to $[-1,1]$ and and $\ell(z,f(z)) = (f(x) - y)^2$. We make an additional convexity assumption on the function class $\cF$.

\begin{assumption}
The class $\cF$ is convex: $\forall~f,f'\in\cF$, $\alpha f+(1-\alpha)f'\in\cF$ for all $\alpha \in [0,1]$. 
\label{ass:convex-class}
\end{assumption}
Note that convexity of $\cF$ is quite different from convexity of $f$ in its parameters, and can always be satisfied by taking the convex hull of a base class. The assumption can be avoided by replacing our ERM based solution with aggregation approaches~\citep[see e.g.][]{tsybakov2003optimal,dalalyan2012sharp}. However, we focus on the convex case here  to illustrate the key ideas.

\begin{theorem}
    Under assumptions~\ref{ass:loss-bdd}, \ref{ass:weight-bdd} and~\ref{ass:convex-class}, with probability at least $1-\delta$, we have
    $\forall w\in\cW$:
    \begin{align*}
         \regret_w(\fhat_{\alg}) \leq &
           \regret_\star   + \order\Big(
        \sqrt{ \regret_\star \cdot B d_{\cF,\cW}(\delta)/n} +
         B d_{\cF,\cW}(\delta)/n\Big)\\ = &\order\Big(\regret_\star + \frac{B d_{\cF,\cW}(\delta)}{n}\Big) , ~~\mbox{where $\regret_\star=\inf_{f \in \cF} \sup_{w' \in \cW} \regret_{w'}(f) $.}
    \end{align*}
    
    \label{thm:fast}
\end{theorem}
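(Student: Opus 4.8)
The plan is to mirror the proof sketch of Theorem~\ref{thm:slow}, but to exploit the curvature of the squared loss together with convexity of $\cF$ (Assumption~\ref{ass:convex-class}) to upgrade the Bernstein variance term into a self-bounding quantity, which is exactly what converts the $1/\sqrt{n}$ rate into $1/n$. The heart of the argument is a variance-to-regret inequality. For fixed $w\in\cW$ and $f\in\cF$, writing $g=f-f_w$ and expanding $(f(x)-y)^2-(f_w(x)-y)^2 = g(x)\bigl(g(x)+2(f_w(x)-y)\bigr)$, I would first use first-order optimality of $f_w$ over the convex set $\cF$, namely $\rE_{\dref}[w(z)(f_w(x)-y)g(x)]\ge 0$, to conclude $\regret_w(f)\ge \rE_{\dref}[w(z)g(x)^2]$. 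Since the excess-loss variable $X = w(z)\bigl((f(x)-y)^2-(f_w(x)-y)^2\bigr)$ satisfies $|X|\le 4B_w$ and $X^2\le 16\,w(z)^2 g(x)^2\le 16B_w\,w(z)g(x)^2$ (using $w\le B_w\le B$), taking expectations gives $\rV[X]\le 16B\,\regret_w(f)$. This self-bounding property is the only place convexity enters.

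With this in hand, the second step is a uniform Bernstein bound. For fixed $f,w$, applying Bernstein's inequality to $\tfrac1n\sum_i X_i$ controls the deviation of the centered empirical regret $\widetilde{\regret}_w(f):=\Rhat_w(f)-\Rhat_w(f_w)$ from $\regret_w(f)$ by $\order\bigl(\sqrt{B\,\regret_w(f)\,d_{\cF,\cW}(\delta)/n}+B\,d_{\cF,\cW}(\delta)/n\bigr)$, where crucially the variance proxy is now $\regret_w(f)$ itself. I would then make this uniform over $\cF$ via the $\ell_\infty$-cover at scale $1/(nLB)$ (Assumption~\ref{ass:loss-bdd} makes the discretization error $\order(1/n)$) and over the finite $\cW$ by a union bound, replacing the failure-probability factor by $d_{\cF,\cW}(\delta)$. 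This yields two-sided bounds relating $\widetilde{\regret}_w(f)$ and $\regret_w(f)$ for all $f,w$ simultaneously.

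Because the empirical objective in~\eqref{eq:minimax-emp-w} uses the empirical minimizer $\fhat_w$ rather than $f_w$, a short intermediate step shows that the gap $\Rhat_w(f_w)-\Rhat_w(\fhat_w)=-\widetilde{\regret}_w(\fhat_w)$ is only $\order(B\,d_{\cF,\cW}(\delta)/n)$; this follows by applying the lower-side bound at $f=\fhat_w$ and solving the resulting self-referential quadratic, which is precisely the classical single-distribution fast-rate argument. Hence the true empirical regret $\regh_w(f)$ and the centered version $\widetilde{\regret}_w(f)$ differ only by a lower-order term.

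Finally I would chain these estimates through the definition of $\fhat_{\alg}$. The lower-side bound at $f=\fhat_{\alg}$ gives $\regret_w(\fhat_{\alg})\le \widetilde{\regret}_w(\fhat_{\alg})+\order\bigl(\sqrt{B\,\regret_w(\fhat_{\alg})\,d_{\cF,\cW}(\delta)/n}+B\,d_{\cF,\cW}(\delta)/n\bigr)$; bounding $\widetilde{\regret}_w(\fhat_{\alg})\le \sup_{w'}\regh_{w'}(\fhat_{\alg})\le \sup_{w'}\regh_{w'}(f_{\alg})$ for the population optimizer $f_{\alg}$, and converting back to population regret via the upper-side bound together with the gap control and $\regret_{w'}(f_{\alg})\le\regret_\star$, yields $\widetilde{\regret}_w(\fhat_{\alg})\le \regret_\star+\order\bigl(\sqrt{B\regret_\star d_{\cF,\cW}(\delta)/n}+B\,d_{\cF,\cW}(\delta)/n\bigr)$. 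Substituting and solving the quadratic inequality in $\sqrt{\regret_w(\fhat_{\alg})}$ produces the first displayed bound, and absorbing the cross term $\sqrt{B\regret_\star d_{\cF,\cW}(\delta)/n}$ by AM-GM gives the collapsed form $\order(\regret_\star+B\,d_{\cF,\cW}(\delta)/n)$. The main obstacle is the first step: the self-bounding variance inequality is where convexity, the squared-loss structure, and the weighting $w\le B$ must be combined correctly, and getting the right $B$-dependence there is what ultimately drives the final rate; the remaining concentration and quadratic-solving steps are routine adaptations of the Theorem~\ref{thm:slow} argument.
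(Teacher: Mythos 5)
Your proposal is correct and follows essentially the same route as the paper: the self-bounding variance inequality $\rE[A^2]\le 16B_w\,\regret_w(f)$ obtained from first-order optimality of $f_w$ over the convex class (this is exactly Lemma~\ref{lemma:convex-var}), a uniform Bernstein bound over the $\ell_\infty$-cover and the finite $\cW$, the $\order(B\,d_{\cF,\cW}(\delta)/n)$ control of $\Rhat_w(f_w)-\Rhat_w(\fhat_w)$, and the chaining through the definition of $\fhat_{\alg}$. The only cosmetic difference is that you solve the self-referential quadratic directly where the paper parametrizes the AM-GM step by $\gamma$ and optimizes $\gamma$ at the end; these are equivalent.
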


Theorem~\ref{thm:fast} crucially leverages the following property of convex classes $\cF$ and squared loss.

\begin{lemma}
    For a convex class $\cF$ and squared loss $\ell(y,f(x)) = (y - f(x))^2$, we have for any distribution $P$ and $f\in\cF$:
        $\rE_P[(f(x) - f_P(x))^2] \leq \regret_P(f)$.
    \label{lemma:convex-var}
\end{lemma}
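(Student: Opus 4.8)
The plan is to exploit the first-order optimality of $f_P$ over the convex set $\cF$, which yields a variational inequality that controls the cross term appearing in the expansion of the regret. First I would fix $f\in\cF$ and consider the segment $f_\alpha := (1-\alpha)f_P + \alpha f$ for $\alpha\in[0,1]$. Convexity of $\cF$ (Assumption~\ref{ass:convex-class}) guarantees $f_\alpha\in\cF$, so the scalar function $g(\alpha) := R_P(f_\alpha) = \rE_P[(f_\alpha(x)-y)^2]$ is minimized over $[0,1]$ at $\alpha=0$, since $f_0 = f_P$ is by definition the risk minimizer $\arginf_{f'\in\cF} R_P(f')$.

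Next I would differentiate $g$ at the left endpoint. Because the loss is bounded (Assumption~\ref{ass:loss-bdd}), the expectation and the derivative commute, and a direct computation gives
\[
g'(0) = 2\,\rE_P\big[(f_P(x)-y)(f(x)-f_P(x))\big] \geq 0 .
\]
This is the key step: the residual $f_P(x)-y$ of the minimizer is, in expectation, non-negatively correlated with every feasible direction $f-f_P$.

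Then I would expand the regret directly. Writing $f(x)-y = (f(x)-f_P(x)) + (f_P(x)-y)$ and squaring yields
\[
\regret_P(f) = R_P(f) - R_P(f_P) = \rE_P\big[(f(x)-f_P(x))^2\big] + 2\,\rE_P\big[(f(x)-f_P(x))(f_P(x)-y)\big].
\]
The second term on the right is exactly $g'(0)$, which is nonnegative by the previous step, so dropping it gives the claimed bound $\rE_P[(f(x)-f_P(x))^2] \leq \regret_P(f)$.

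The computation is essentially routine, so the one point deserving care is the main obstacle: we only obtain the one-sided stationarity $g'(0)\geq 0$, not the equality $g'(0)=0$ that an interior or unconstrained minimizer would supply, because $\alpha=0$ is a boundary point of $[0,1]$ and $f_P$ minimizes only over $\cF$. Fortunately a one-sided condition is all we need, since it already forces the cross term to have the correct sign. Convexity of $\cF$ enters precisely to ensure that the whole segment $f_\alpha$ stays feasible; without it the perturbation argument collapses and the inequality can genuinely fail.
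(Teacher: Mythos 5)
Your proof is correct and follows essentially the same route as the paper's: both derive the first-order optimality condition $\rE_P[(f_P(x)-y)(f(x)-f_P(x))]\geq 0$ from the feasibility of the segment $\alpha f + (1-\alpha)f_P$ (the paper divides the quadratic-in-$\alpha$ excess risk by $\alpha$ and lets $\alpha\downarrow 0$, which is your $g'(0)\geq 0$ without needing to justify differentiation under the expectation), and then expand the regret to drop the nonnegative cross term. No gaps.
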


Note that a similar result holds for the squared loss and any class $\cF$, if we replace $f_P$ with the unconstrained minimizer of $R_P(f)$ over all measurable functions, but using this property in our analysis would result in an additional approximation error term in the bound of Theorem~\ref{thm:fast}, which is undesirable when considering regret within the function class.

\begin{proof}[Sketch of the proof of Theorem~\ref{thm:fast}]
The only difference with the proof of Theorem~\ref{thm:slow} is in the handling of the variance of $A = w(z)(\ell(z,f(z)) - \ell(z,f_w(z))$, which we use as our random variable of interest in this case. Since $w(z) \leq B_w$ almost surely, we get 
\begin{align*}
    \rE_{\dref}[A^2] &\stackrel{(a)}{\leq} 16B_w\rE_w[(f(x) - f_w(x))^2] 
    \leq 16B_w\regret_w(f),
\end{align*}
where the inequality $(a)$ follows from the boundedness of $f\in\cF$ and $y$ and the final inequality uses Lemma~\ref{lemma:convex-var}. We can now follow the usual recipe for fast rates with a self-bounding property of the variance. Rest of the arguments mirror the proof of Theorem~\ref{thm:slow}.
\end{proof}
\paragraph{Comparison with \dro.} It is natural to ask if it is possible to obtain fast rate for \dro  under the conditions of Theorem~\ref{thm:fast}. Of course it is not possible to show a regret bound similar to Theorem~\ref{thm:fast} for \dro, since it does not optimize the worst-case regret as Proposition~\ref{prop:dro-noise} illustrates. Nevertheless, we investigate if \dro can achieve fast rate for the raw risk. Unfortunately, the following result shows that even under the assumptions of Theorem~\ref{thm:fast}, the worst-case risk criterion for \dro is still subject to the slower $1/\sqrt{n}$ rate. The reason is that regret has a markedly smaller variance than the risk, and the latter usually deviates at a $1/\sqrt{n}$ rate even for squared loss.
\begin{proposition}
Under the assumptions of Theorem~\ref{thm:fast}, there is a family of distributions $\cW$ with $|\cW| = 2$, satisfying $B_w \leq B = 2$, and a function class $\cF$ with $d_{\cF} = \order(\ln(nL))$, such that with probability at least $1-1/n$, we have 
    $\sup_{w\in\cW} R_w(\fhat_{\dro}) - \sup_{w\in\cW} R_w(f_{\dro}) = \Omega(\sqrt{(\ln n)/n})$.
\label{prop:dro-slow}
\end{proposition}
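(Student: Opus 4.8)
The plan is to exhibit a small squared-loss problem in which the worst-case \emph{risk} surface $f \mapsto \sup_{w\in\cW} R_w(f)$ is non-smooth at its minimizer, in sharp contrast with the worst-case \emph{regret} surface used by \alg, which Lemma~\ref{lemma:convex-var} shows is controlled quadratically. The conceptual driver is that when the two reweightings in $\cW$ carry different noise levels, the risk curves $R_{w_1}(\cdot)$ and $R_{w_2}(\cdot)$ cross transversally at the population optimum $f_{\dro}$, so $\sup_w R_w$ has a genuine kink there with one-sided directional derivatives bounded away from $0$. Consequently, displacing the empirical minimizer $\fhat_{\dro}$ from $f_{\dro}$ by an amount $t$ inflates the worst-case risk by $\Omega(t)$ --- a first-order rather than quadratic penalty. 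This is exactly the degradation that subtracting the per-distribution minima removes for \alg, and it is the engine of the lower bound.

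Concretely, I would first set up a minimal gadget (estimating a constant on a trivial covariate, $y\in\{-1,+1\}$) with two reweightings $w_1,w_2$ of $\dref$ chosen so that (i) the population risks are equal-and-crossing at $f_{\dro}$ with kink slopes $\pm\delta$, $\delta=\Theta(1)$, and (ii) $B_w\le 2$ with the loss in $[0,1]$. I would compute $f_{\dro}$ and $\sup_w R_w(f_{\dro})$ in closed form, then verify that $\max_w \Rhat_w(\cdot)$ is minimized near the \emph{empirical} crossover, whose location deviates from $f_{\dro}$ by the finite-sample fluctuation of the empirical moments. Translating this displacement through the kink slope $\delta$ already yields an $\Omega(1/\sqrt n)$ excess risk.

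The main difficulty is upgrading ``$\Omega(1/\sqrt n)$ with constant probability'' to ``$\Omega(\sqrt{(\ln n)/n})$ with probability $1-1/n$'': a single crossover produces a mean-zero $\order(1/\sqrt n)$ displacement, which is large only on a constant-probability event. To get the high-probability statement and the logarithmic factor simultaneously, I would replicate the gadget into a family of near-tied configurations and let the worst case over this family drive the bound, using that the maximum of $\sim n$ essentially independent sub-Gaussian fluctuations of scale $1/\sqrt n$ concentrates at $\Theta(\sqrt{(\ln n)/n})$ and exceeds $c\sqrt{(\ln n)/n}$ with probability $1-1/n$ by a Chernoff/maximal argument. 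On this event the empirical DRO minimizer is forced to commit to a configuration whose worst-case population risk is $\Omega(\sqrt{(\ln n)/n})$ above optimal.

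The final step is bookkeeping against the hypotheses of Theorem~\ref{thm:fast}: keeping $|\cW|=2$, $B_w\le 2$, complexity $d_\cF=\order(\ln(nL))$, and respecting convexity of $\cF$ (Assumption~\ref{ass:convex-class}). I expect the genuine obstacle to sit exactly here --- reconciling the need for ``many near-ties'' (to obtain the $1-1/n$ probability and the $\sqrt{\ln n}$ factor) with a \emph{convex} class of only logarithmic metric entropy, since for a fixed pair of distributions the kink is codimension one and naively only a one-dimensional transverse fluctuation enters. I would resolve this by placing the replicated configurations on a low-dimensional convex body, so the covering number stays $\order(\ln(nL))$ while enough independent transverse directions remain exposed to the empirical noise, and then conclude by propagating the guaranteed displacement through the kink slope.
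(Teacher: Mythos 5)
Your core gadget is exactly the paper's: two reweightings $w_1,w_2$ of $\dref$ inducing two quadratic risk curves $R_i(f)=(f-\mu_i)^2+1$ that cross transversally at $f_{\dro}=(\mu_1+\mu_2)/2$, so that $\sup_w R_w$ has a kink there and a displacement $\xi$ of the empirical crossover inflates the worst-case risk by $|\dmu\xi|+\xi^2$, i.e.\ at first order in $\xi$. The paper instantiates this with $\cF=[-C,C]$, samples $z=(x,i)$ where the group label $i$ is a fair coin, $w_i$ the indicator of group $i$, and computes $\xi=(S_1+S_2)\dmu/(2(\dmu+\ds))$ with $S_i$ the average noise in group $i$. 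Up to this point your plan and the paper's proof coincide.

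The divergence is in the last step, and you have put your finger on the genuine difficulty: $\xi$ is a single mean-zero fluctuation of scale $1/\sqrt n$, so the single gadget yields $\Omega(1/\sqrt n)$ only on a constant-probability event, not $\Omega(\sqrt{(\ln n)/n})$ with probability $1-1/n$. It is worth noting that the paper's own proof does not close this gap either: after deriving the excess $=\xi^2+|\dmu\xi|$, it only establishes the \emph{upper} bound $|\xi|=\order(\sqrt{(\ln n)/n})$ on the $1-1/n$ event (by upper-bounding $|S_1+S_2|$ and $\ds$); no lower bound on $|\xi|$ is derived, so the stated $\Omega(\cdot)$ conclusion is not actually reached from the construction as written. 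Your proposed repair --- replicating near-tied configurations and taking a maximum of $\mathrm{poly}(n)$ independent fluctuations --- is the natural way to manufacture a $\sqrt{\ln n}$ factor at confidence $1-1/n$, but it collides with the stated constraints: with $|\cW|=2$ fixed, the location of the empirical crossover of the two empirical risk curves is governed by the single scalar $S_1+S_2$, so only one transverse fluctuation is available; and a convex $\cF$ with $\ell_\infty$-entropy $\order(\ln(nL))$ at scale $1/(nLB)$ cannot expose $\mathrm{poly}(n)$ independent directions (a convex hull of $N$ points whose empirical risks fluctuate independently forces either the dimension, and hence the entropy at that scale, to grow with $N$, or strong correlations among the fluctuations). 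So the replication step is not just unfinished bookkeeping; as described it cannot be executed within $|\cW|=2$, Assumption~\ref{ass:convex-class}, and the entropy budget. If you instead settle for $\Omega(1/\sqrt n)$ with constant probability --- obtained by anti-concentration of $S_1+S_2$ (Paley--Zygmund or Berry--Esseen) --- your single-gadget argument goes through and already separates the \dro rate from the $\order(B/n)$ rate of Theorem~\ref{thm:fast}, which is the substantive content of the proposition.
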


\section{Adapting to Misspecification through Non-uniform Scaling across Distributions}
\label{sec:scaling}

Our result so far guarantees uniform regret, with learning complexity that depends on the worst case complexity over all distributions $w$. For example, if the complexity of the distribution family is characterized by $B_w$ as in our analysis, then the bound depends on 
$\sup_w B_w$. In this section, we show that it is possible to improve such dependence using an adaptive scaling of the objective function. 
To better illustrate the consequences of this approach, let us assume that we can rewrite our guarantees in the form that for each $w\in\cW$ and $f\in\cF$, we have with probability at least $1-\delta$, 
\begin{equation}
    \regret_w(f) \leq c (\Rhat_w(f) -  \Rhat_w(\fhat_w)) + c_w\epsilon,
    \label{eq:risk-scale}
\end{equation}
where the scaling function $c_w$ is a distribution $P$ dependent quantity that is known to the algorithm. For instance, under conditions of Theorem~\ref{thm:slow}, we get this bound with $c=1$, $c_w = \sigma_w + B_w/\sqrt{n}$ and $\epsilon = d_{\cF,\cW}(\delta)/\sqrt{n}$ (assuming $d_{\cF,\cW}(\delta) \geq 1$). Under conditions of Theorem~\ref{thm:fast}, we can use $c=3$, $c_w = B_w$ and $\epsilon = d_{\cF,\cW}/n$, by taking the second $O(B/n)$ bound of Theorem~\ref{thm:fast}. While this does worsen our dependence on $d_{\cF,\cW}(\delta)$ in the first case, this is preferable to assuming that $d_{\cF,\cW}(\delta)$ is known and setting of $c_w = \sigma_w + B_w\sqrt{d_{\cF,\cW}(\delta)/n}$ and $\epsilon = \sqrt{d_{\cF,\cW}(\delta)/n}$.

Since we assume $c_w$ is known, then we can define the estimator (Scaled \alg, or \algscale):

\begin{equation}
    \fhat_{\algscale} := \arginf_{f \in \cF}\sup_{w \in \cW} (\Rhat_w(f) - \Rhat_w(\fhat_w))/c_w,
    \label{eq:minimax-emp-w-scale}
\end{equation}

We now present our main results for the statistical properties of $\fhat_{\algscale}$ under the assumptions of Theorems~\ref{thm:slow} and \ref{thm:fast}, before comparing the results and discussing some key consequences.

\begin{theorem}
    Under assumptions of Theorem~\ref{thm:slow}, consider the estimator $\fhat_{\algscale}$ with $c_w = \sigmah_w + B_w/\sqrt{n}$, where $\sigmah_w^2 = \frac{1}{n}\sum_{i=1}^n w(z_i)^2$ is the empirical second moment of the importance weights. Then with probability at least $1-\delta$, we have $\forall w\in\cW$:
    \begin{align*}
        \regret_w(\fhat_{\algscale}) \leq  c_w\inf_{f\in\cF} \sup_{w'\in\cW} \frac{\regret_{w'}(f)}{c_{w'}} + \underbrace{d_{\cF,\cW}(\delta)\;\order\left(\sqrt{\sigma_w^2/n} + B_w/n\right)}_{\epsilon'_w}.
    \end{align*}
    \label{thm:slow-scale}
\end{theorem}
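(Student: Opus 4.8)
The plan is to follow the proof of Theorem~\ref{thm:slow} step for step, inserting the per-distribution scaling, and to add a single new concentration bound that controls the \emph{data-driven} scaling $c_w = \sigmah_w + B_w/\sqrt{n}$ in terms of its population counterpart. First I would reuse, essentially verbatim, the uniform deviation estimate from the proof of Theorem~\ref{thm:slow}: on an event $\event_1$ of probability at least $1-\delta/2$, the Bernstein-plus-covering argument gives, simultaneously for all $f\in\cF$ and $w\in\cW$,
\[
  \left|\regret_w(f) - \big(\Rhat_w(f) - \Rhat_w(\fhat_w)\big)\right| \;\le\; \epsilon_w \;=\; \order\!\left(\sqrt{\tfrac{\sigma_w^2\, d_{\cF,\cW}(\delta)}{n}} + \tfrac{B_w\, d_{\cF,\cW}(\delta)}{n}\right).
\]
Writing $\bar c_w := \sigma_w + B_w/\sqrt{n}$ and using $d_{\cF,\cW}(\delta)\ge 1$, this is exactly the scaled guarantee~\eqref{eq:risk-scale} with $c=1$, $\epsilon = d_{\cF,\cW}(\delta)/\sqrt{n}$ and the \emph{population} scaling $\bar c_w$, i.e.\ $\epsilon_w \le \order(\bar c_w\,\epsilon)$.

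The one new ingredient is to compare the estimated scaling $c_w$ with $\bar c_w$. I would apply Bernstein's inequality to $\sigmah_w^2 = \frac1n\sum_i w(z_i)^2$, whose summands lie in $[0,B_w^2]$ with mean $\rE_{\dref}[w^2]\le\sigma_w^2$ and variance at most $B_w^2\,\rE_{\dref}[w^2]\le B_w^2\sigma_w^2$, and take a union bound over the finite class $\cW$ to obtain an event $\event_2$ of probability $\ge 1-\delta/2$. Splitting the resulting cross term by AM--GM yields, uniformly in $w$,
\[
  \tfrac12\sigma_w^2 - \order\!\left(\tfrac{B_w^2\ln(|\cW|/\delta)}{n}\right) \;\le\; \sigmah_w^2 \;\le\; \tfrac32\sigma_w^2 + \order\!\left(\tfrac{B_w^2\ln(|\cW|/\delta)}{n}\right).
\]
Taking square roots and using the additive floor $B_w/\sqrt{n}$ built into $c_w$, I would conclude that $c_w$ and $\bar c_w$ are equivalent up to universal constants (and, in the regime $\sigma_w \lesssim B_w/\sqrt n$, up to a factor $\sqrt{\ln(|\cW|/\delta)}$): $\bar c_w \le \order(c_w)$ and $c_w \le \order(\bar c_w)$. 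It is precisely this floor that keeps the ratio $\bar c_w/c_w$ bounded when $\sigma_w$ is small, and this two-sided comparison is the technical heart of the argument.

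Finally, on $\event_1\cap\event_2$ (probability $\ge 1-\delta$), I would combine the two bounds through the optimality of $\fhat_{\algscale}$. Fix $w_0\in\cW$. Dividing the deviation bound by $c_{w_0}$, bounding $\tfrac{1}{c_{w_0}}\big(\Rhat_{w_0}(\fhat_{\algscale})-\Rhat_{w_0}(\fhat_{w_0})\big) \le \sup_{w}\tfrac{1}{c_w}\big(\Rhat_{w}(\fhat_{\algscale})-\Rhat_{w}(\fhat_{w})\big)$, using the definition of $\fhat_{\algscale}$ as the minimizer of the scaled empirical objective to pass to $\inf_{f\in\cF}$, and converting empirical regret back to population regret yields
\[
  \frac{\regret_{w_0}(\fhat_{\algscale})}{c_{w_0}} \;\le\; \inf_{f\in\cF}\sup_{w\in\cW}\frac{\regret_w(f)}{c_w} + \sup_{w\in\cW}\frac{\epsilon_w}{c_w} + \frac{\epsilon_{w_0}}{c_{w_0}}.
\]
Multiplying by $c_{w_0}$ recovers the leading term $c_{w_0}\inf_f\sup_w \regret_w(f)/c_w$. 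For the remainder, the lower comparison gives $\epsilon_w/c_w \le \order(\bar c_w/c_w)\,\epsilon = \order(\epsilon)$, so that $c_{w_0}\sup_w \epsilon_w/c_w + \epsilon_{w_0} \le \order(c_{w_0}\,\epsilon) \le \order(\bar c_{w_0}\,\epsilon)$ by the upper comparison; expanding $\bar c_{w_0}\epsilon = (\sigma_{w_0}+B_{w_0}/\sqrt n)\,d_{\cF,\cW}(\delta)/\sqrt n = \order\!\big(d_{\cF,\cW}(\delta)(\sqrt{\sigma_{w_0}^2/n}+B_{w_0}/n)\big) = \epsilon'_{w_0}$ completes the argument. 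The routine parts (Bernstein, covering, and the union bound over $\cW$) are inherited from Theorem~\ref{thm:slow}; the step that needs genuine care is the two-sided scaling comparison of the previous paragraph, and in particular checking that the low-variance regime does not inflate the final rate by more than the logarithmic factors already absorbed into $d_{\cF,\cW}(\delta)$ through $\ln(|\cW|/\delta)\le d_{\cF,\cW}(\delta)$.
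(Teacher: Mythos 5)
Your proposal is correct and follows essentially the same route as the paper: the paper likewise reuses the Bernstein-plus-covering chain from Theorem~\ref{thm:slow}, controls the data-driven scaling via a concentration bound on $\sigmah_w^2$ (its Lemma~\ref{lem:weight-var}, which is the one-sided half of your two-sided comparison), and then divides by $c_w$ and invokes the optimality of $\fhat_{\algscale}$ exactly as you do. The only cosmetic difference is that the paper substitutes $\sigmah_w$ directly into the deviation bound so the error term is $c_w\epsilon$ by construction, whereas you keep the population scaling $\bar c_w$ and establish the equivalence $c_w\asymp\bar c_w$ explicitly; both absorb the same residual $\sqrt{\ln(|\cW|/\delta)}$ factors into $d_{\cF,\cW}(\delta)$.
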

We can also state a version for squared loss and convex classes.

\begin{theorem}
    Under assumptions of Theorem~\ref{thm:fast}, consider the estimator $\fhat_{\algscale}$ with $c_w = B_w$. Then with probability at least $1-\delta$, we have
    \begin{align*}
        \forall w\in\cW~:~ \regret_w(\fhat_{\algscale}) \leq  B_w
        \left[ \regret_\star + \order\left(\sqrt{\regret_\star \cdot \frac{ d_{\cF,\cW}(\delta)}{n}} + \frac{ d_{\cF,\cW}(\delta)}{n}\right)
        \right] ,
    \end{align*}
    where $\regret_\star=\inf_{f\in\cF} \sup_{w'\in\cW} \frac{\regret_{w'}(f)}{B_{w'}}$. 
    \label{thm:fast-scale}
\end{theorem}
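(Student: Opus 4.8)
The plan is to mirror the proof of Theorem~\ref{thm:fast}, but to carry the per-distribution scaling factor $1/B_w$ through every step and to exploit the one cancellation it produces. Write $\Phi(f) := \sup_{w\in\cW}\regret_w(f)/B_w$ and $\widehat{\Phi}(f) := \sup_{w\in\cW}(\Rhat_w(f) - \Rhat_w(\fhat_w))/B_w$ for the population and empirical scaled objectives, so that $\fhat_{\algscale} = \arginf_{f\in\cF}\widehat{\Phi}(f)$ and $\regret_\star = \inf_{f\in\cF}\Phi(f)$. Since $\regret_w(\fhat_{\algscale}) = B_w\cdot\bigl(\regret_w(\fhat_{\algscale})/B_w\bigr) \le B_w\,\Phi(\fhat_{\algscale})$, it suffices to show $\Phi(\fhat_{\algscale}) \le \regret_\star + \order(\sqrt{\regret_\star\,d_{\cF,\cW}(\delta)/n} + d_{\cF,\cW}(\delta)/n)$. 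The key observation is that the fast-rate deviation term from Theorem~\ref{thm:fast}, namely $\sqrt{\regret_w(f)\,B_w\,d_{\cF,\cW}(\delta)/n} + B_w\,d_{\cF,\cW}(\delta)/n$, becomes $\sqrt{(\regret_w(f)/B_w)\,d_{\cF,\cW}(\delta)/n} + d_{\cF,\cW}(\delta)/n$ after division by $B_w$: the explicit $B_w$ disappears, and the square-root term now depends only on the scaled regret $\regret_w(f)/B_w \le \Phi(f)$.

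First I would reuse the uniform Bernstein concentration underlying Theorem~\ref{thm:fast}: applying Bernstein's inequality to $A = w(z)(\ell(z,f(z)) - \ell(z,f_w(z)))$, whose variance is bounded by $16 B_w\regret_w(f)$ via Lemma~\ref{lemma:convex-var} and whose range is $\order(B_w)$, and taking an $\ell_\infty$-cover of $\cF$ together with a union bound over the finite $\cW$, yields a two-sided comparison between $\regret_w(f)$ and $\Rhat_w(f) - \Rhat_w(\fhat_w)$ up to the deviation above, valid simultaneously for all $f\in\cF$ and $w\in\cW$ with probability $1-\delta$. (The passage from the population minimizer $f_w$ to the empirical minimizer $\fhat_w$ costs only an extra $\order(B_w d_{\cF,\cW}(\delta)/n)$, controlled on the same event.) Dividing each inequality by $B_w$ and taking the supremum over $w$ gives, on the one hand, $\Phi(\fhat_{\algscale}) \le \widehat{\Phi}(\fhat_{\algscale}) + \order(\sqrt{\Phi(\fhat_{\algscale})\,d_{\cF,\cW}(\delta)/n} + d_{\cF,\cW}(\delta)/n)$, and on the other hand $\widehat{\Phi}(f_\star) \le \regret_\star + \order(\sqrt{\regret_\star\,d_{\cF,\cW}(\delta)/n} + d_{\cF,\cW}(\delta)/n)$, where $f_\star$ attains $\regret_\star$; here I use $\regret_w(f_\star)/B_w \le \Phi(f_\star) = \regret_\star$ for every $w$.

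Next I would invoke the defining optimality $\widehat{\Phi}(\fhat_{\algscale}) \le \widehat{\Phi}(f_\star)$ and chain the two displays to obtain the self-bounding inequality $\Phi(\fhat_{\algscale}) \le \regret_\star + \order(\sqrt{\regret_\star\,d_{\cF,\cW}(\delta)/n}) + \order(\sqrt{\Phi(\fhat_{\algscale})\,d_{\cF,\cW}(\delta)/n} + d_{\cF,\cW}(\delta)/n)$. Treating this as a quadratic inequality in $\sqrt{\Phi(\fhat_{\algscale})}$ and solving (using $\sqrt{ab}\le \tfrac12(a+b)$ to absorb the $\sqrt{\Phi(\fhat_{\algscale})\cdot d_{\cF,\cW}(\delta)/n}$ term into a constant fraction of $\Phi(\fhat_{\algscale})$ plus lower-order pieces) yields $\Phi(\fhat_{\algscale}) \le \regret_\star + \order(\sqrt{\regret_\star\,d_{\cF,\cW}(\delta)/n} + d_{\cF,\cW}(\delta)/n)$. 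Multiplying through by $B_w$ then gives the claimed per-$w$ bound.

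The main obstacle I anticipate is bookkeeping rather than a new idea: I must keep the leading constant on $\regret_\star$ equal to $1$ (as in Theorem~\ref{thm:fast}), which requires the sharp two-sided form of the Bernstein bound and care in resolving the self-bounding inequality so that all multiplicative slack is pushed into the $\sqrt{\regret_\star\,d_{\cF,\cW}(\delta)/n}$ and $d_{\cF,\cW}(\delta)/n$ terms. A secondary point is to check that the supremum over $w$ commutes correctly with the per-$w$ concentration, i.e.\ that bounding each $\regret_w(\fhat_{\algscale})/B_w$ by $\Phi(\fhat_{\algscale})$ inside the deviation term is legitimate; this holds because the high-probability event is uniform over $w\in\cW$. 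Unlike Theorem~\ref{thm:slow-scale}, no additional error from estimating $c_w$ arises here, since $c_w = B_w$ is a known deterministic quantity.
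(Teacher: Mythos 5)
Your proposal is correct and takes essentially the same route as the paper: the same Bernstein concentration with variance controlled via Lemma~\ref{lemma:convex-var} (uniform over $\cF\times\cW$ by covering and union bound), division by $c_w=B_w$ to cancel the explicit $B_w$ in the deviation term, the empirical optimality of $\fhat_{\algscale}$, and a final resolution of the resulting self-bounding inequality. The only cosmetic difference is that the paper absorbs the square-root deviation into a free parameter $\gamma$ via AM--GM up front and optimizes $\gamma$ at the end, whereas you keep the square root explicit and solve the quadratic at the end---these are equivalent.
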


Comparing Theorems~\ref{thm:slow-scale} and~\ref{thm:fast-scale} with their counterparts of \alg, we notice a subtle but important difference. The finite sample deviation term $\epsilon'_w$ in Theorem~\ref{thm:slow-scale} depends on the weights $w$ for which the bound is stated, while the corresponding term in Theorem~\ref{thm:slow-scale} is $\sup_{w'\in\cW} \epsilon_{w'}$. In other words, the heterogeneous scaling in \algscale allows us to obtain a regret bound for each distribution depending on the deviation properties of that particular distribution, unlike in the basic \alg approach. To see the benefits of this approach, we state a corollary of Theorems~\ref{thm:slow-scale} and~\ref{thm:fast-scale} next.

\begin{corollary}[Aligned distribution class]
Suppose that $\dref$ and $\cW$ satisfy that $R_w(f_w) = R_w(f^\star)$ for all $w\in\cW$, with $f^\star \in \cF$. Under conditions of Theorem~\ref{thm:slow-scale}, with probability at least $1-\delta$:
    \begin{align*}
        \forall w\in\cW~:~R_w(\fhat_{\algscale}) \leq R_w(f^\star) + d_{\cF,\cW}(\delta)\;\order\left(\sqrt{\frac{\sigma_w^2}{n}} + \frac{B_w}{n}\right).
    \end{align*}
In the same setting, under the assumptions of Theorem~\ref{thm:fast-scale}, we have with probability at least $1-\delta$:
    \begin{align*}
        \forall w\in\cW~:~ R_w(\fhat_{\algscale}) \leq R_w(f^\star) + \order\left(\frac{B_{w} d_{\cF,\cW}(\delta)}{n}\right).
    \end{align*}
    \label{cor:well}
\end{corollary}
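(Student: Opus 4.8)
The plan is to obtain both displayed bounds as immediate specializations of Theorems~\ref{thm:slow-scale} and~\ref{thm:fast-scale}, with the alignment hypothesis playing exactly two roles: it forces the leading scale-normalized regret term to vanish, and it lets us identify $R_w(f_w)$ with $R_w(f^\star)$ so that a regret bound becomes an excess-risk bound relative to $f^\star$. First I would record the structural consequence of alignment. Since $R_w(f_w) = R_w(f^\star)$ for every $w\in\cW$ with $f^\star\in\cF$, the single function $f^\star$ is simultaneously a minimizer of $R_w$ over $\cF$ for all $w$, and hence $\regret_w(f^\star) = R_w(f^\star) - R_w(f_w) = 0$ for every $w\in\cW$.

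Next I would argue that the scale-normalized worst-case regret vanishes at $f^\star$, for either choice of scaling. For any positive $c_{w'}$—in particular $c_{w'} = \sigmah_{w'} + B_{w'}/\sqrt{n}$ in the slow-rate setting and $c_{w'} = B_{w'}$ in the fast-rate setting—substituting $f = f^\star$ gives $\sup_{w'\in\cW} \regret_{w'}(f^\star)/c_{w'} = 0$. Because each $\regret_{w'}(f)$ is non-negative by its definition as an excess risk and $f^\star\in\cF$ is feasible, the infimum over $f\in\cF$ is attained with value $0$; that is, $\regret_\star = \inf_{f\in\cF}\sup_{w'\in\cW}\regret_{w'}(f)/c_{w'} = 0$ in both settings.

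With $\regret_\star = 0$ in hand I would invoke the two scaled theorems directly. In the slow-rate case Theorem~\ref{thm:slow-scale} gives $\regret_w(\fhat_{\algscale}) \leq c_w\cdot 0 + \epsilon'_w = \epsilon'_w = d_{\cF,\cW}(\delta)\,\order(\sqrt{\sigma_w^2/n} + B_w/n)$. In the fast-rate case Theorem~\ref{thm:fast-scale} gives $\regret_w(\fhat_{\algscale}) \leq B_w[\,0 + \order(\sqrt{\regret_\star \cdot d_{\cF,\cW}(\delta)/n} + d_{\cF,\cW}(\delta)/n)\,] = \order(B_w d_{\cF,\cW}(\delta)/n)$, where the cross term also vanishes since $\regret_\star = 0$. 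To convert these regret bounds into the stated excess-risk bounds I would again use alignment: $\regret_w(\fhat_{\algscale}) = R_w(\fhat_{\algscale}) - R_w(f_w) = R_w(\fhat_{\algscale}) - R_w(f^\star)$, so each displayed inequality in the corollary follows verbatim.

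There is essentially no analytic obstacle here, as the corollary is a specialization rather than a fresh argument. The one point requiring care is justifying that the infimum defining $\regret_\star$ equals $0$ exactly, rather than being merely upper-bounded by $0$; this uses non-negativity of regret together with feasibility $f^\star\in\cF$, and it is precisely what allows the multiplicative prefactor $c_w$ (resp.\ $B_w$) in the two theorems to multiply a vanishing quantity and disappear. A secondary point worth stating explicitly is that the deviation term $\epsilon'_w$ in Theorem~\ref{thm:slow-scale} is indexed by the very $w$ for which the bound is asserted, so the resulting excess-risk guarantee inherits the promised per-distribution dependence on $\sigma_w^2$ and $B_w$ with no $\sup_{w'}$ penalty—this is the substantive payoff of the non-uniform scaling over the basic \alg analysis.
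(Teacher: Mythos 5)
Your proposal is correct and matches the paper's own (one-line) proof, which simply substitutes $f = f_w = f^\star$ into Theorems~\ref{thm:slow-scale} and~\ref{thm:fast-scale}; your additional observations that alignment forces $\regret_{w'}(f^\star)=0$, hence $\regret_\star=0$ by non-negativity of regret, and that $R_w(f_w)=R_w(f^\star)$ converts the regret bound into the stated excess-risk bound, are exactly the steps the paper leaves implicit. No gaps.
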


Both results follow by choosing $f = f_w = f^\star$ in Theorems~\ref{thm:slow-scale} and~\ref{thm:fast-scale}. This result shows that the rescaling allows our bounds to adapt to the closeness of a target distribution to the data collection distribution, simultaneously for all target distributions. The alignment condition of a shared $f^\star$ is always true in well-specified problems, and we illustrate a consequence of Corollary~\ref{cor:well} for well-specified linear regression in Appendix~\ref{sec:linear}.

\section{Algorithmic considerations}
\label{sec:algo}

So far our development has focused on the statistical properties of \alg. In this section, we discuss how the \alg estimator can be computed from a finite dataset, given some reasonable computational assumptions on the function class $\cF$.

\begin{definition}[ERM oracle]
Given a dataset of the form $(\omega_i, z_i)_{i=1}^n$, an ERM oracle for $\cF$ solves the weighted empirical risk minimization problem: $\min_{f\in\cF} \omega_i \ell(z_i, f(z_i))$.
\label{def:erm}
\end{definition}
While we assume access to an exact ERM oracle, we can weaken the notion to an approximate oracle with an optimization error comparable to the statistical error from finite samples. Given such an oracle, we can now approximately solve the \alg objective~\eqref{eq:minimax-emp-w} (or~\eqref{eq:minimax-emp-w-scale}) by using a well-known strategy of solving minimax problems as two player zero-sum games. To do so, we recall our assumption of a finite class $\cW$, and let $\Delta(\cW)$ denote the set of all distributions over the importance weight family. Similarly $\Delta(\cF)$ is the set of distributions over our function class. Then we can rewrite the objective in Equation~\eqref{eq:minimax-emp-w} as:

\begin{equation}
    \inf_{f \in \cF}\sup_{w \in \cW} \Rhat_w(f) - \Rhat_w(\fhat_w) = \inf_{P \in \Delta(\cF)}\sup_{\rho\in\Delta(\cW)} \rE_{f\sim P, w\sim\rho} \left[\Rhat_w(f) - \Rhat_w(\fhat_w)\right].
    \label{eq:minimax-emp-w-dist}
\end{equation}

The objective is bilinear in $P$ and $\rho$ so that the following result holds due to the exchangeability of minimum and maximum for convex-concave saddle point problems. 
It is also a direct consequence of Proposition~\ref{prop:algo}.
\begin{proposition}
There exists $\hat{\rho} \in \Delta(\cW)$ so that the solution of \alg is equivalent to the following solution of the weighted ERM method:
$\fhat_{\alg}=\argmin_{f \in \cF} 
\rE_{w \sim\hat{\rho}} \Rhat_w(f)$.
\end{proposition}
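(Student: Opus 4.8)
The plan is to read $\hat\rho$ off a saddle point of the two-player zero-sum game whose value is the empirical \alg objective. Starting from the bilinear reformulation~\eqref{eq:minimax-emp-w-dist}, I would first invoke a minimax theorem (Sion's) to exchange the infimum and supremum. The hypotheses hold because the payoff $\rE_{f\sim P, w\sim\rho}[\Rhat_w(f) - \Rhat_w(\fhat_w)]$ is bilinear, hence convex in $P$ and linear (in particular concave) in $\rho$; the sets $\Delta(\cF)$ and $\Delta(\cW)$ are convex; and $\Delta(\cW)$ is compact because $\cW$ is finite. Compactness of $\Delta(\cW)$ also guarantees that the outer supremum over $\rho$ is attained, so there is a saddle point $(\hat P,\hat\rho)$, and its max-component $\hat\rho\in\Delta(\cW)$ is the distribution named in the statement.

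Next I would extract the best-response structure. The saddle inequalities say in particular that $\hat P$ minimizes $P\mapsto \rE_{f\sim P, w\sim\hat\rho}[\Rhat_w(f)-\Rhat_w(\fhat_w)]$ over $\Delta(\cF)$. Since this map is linear in $P$, its minimum over the simplex is attained at a point mass, so there is a pure best response $f^\dagger\in\arginf_{f\in\cF}\rE_{w\sim\hat\rho}[\Rhat_w(f)-\Rhat_w(\fhat_w)]$. Because $\rE_{w\sim\hat\rho}\Rhat_w(\fhat_w)$ is constant in $f$, it drops out of the $\arginf$, giving $f^\dagger\in\arginf_{f\in\cF}\rE_{w\sim\hat\rho}\Rhat_w(f)$. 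Setting the per-sample weights $\omega_i = \frac1n\sum_{w\in\cW}\hat\rho(w)\,w(z_i)$ turns $\rE_{w\sim\hat\rho}\Rhat_w(f)$ into $\sum_{i=1}^n \omega_i\,\ell(z_i,f(z_i))$, which is exactly a call to the weighted ERM oracle of Definition~\ref{def:erm}, identifying the weighted-ERM minimizer with $f^\dagger$.

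It then remains to argue that $f^\dagger$ solves the \alg problem, i.e.\ that it may be taken as $\fhat_{\alg}$. The cleanest route is to verify that $(\fhat_{\alg},\hat\rho)$ is itself a saddle point: on one side, $\sup_{\rho}\rE_{w\sim\rho}[\Rhat_w(\fhat_{\alg})-\Rhat_w(\fhat_w)] = \sup_{w\in\cW}[\Rhat_w(\fhat_{\alg})-\Rhat_w(\fhat_w)]$ equals the pure \alg value by the definition of $\fhat_{\alg}$ in~\eqref{eq:minimax-emp-w}; on the other, $\arginf_{f}\rE_{w\sim\hat\rho}[\Rhat_w(f)-\Rhat_w(\fhat_w)]$ equals the game value $V$. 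Matching these requires the pure \alg value to coincide with $V$, which is where convexity enters: under Assumption~\ref{ass:convex-class} together with convexity of the squared loss in $f$, for any $P\in\Delta(\cF)$ the mean function $\bar f = \rE_{f\sim P}f$ lies in $\cF$ and, by Jensen applied to each $\Rhat_w$, does at least as well against every $w$ as the mixture $P$. Hence randomizing over $\cF$ never helps the minimizer, so $\inf_{P}\sup_\rho = \inf_{f}\sup_w$ and the game value equals the pure \alg value. Both saddle inequalities then hold for $(\fhat_{\alg},\hat\rho)$, and the second one yields $\fhat_{\alg}\in\arginf_f \rE_{w\sim\hat\rho}\Rhat_w(f)$, completing the identification.

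I expect this last reconciliation to be the main obstacle. A pure best response to the optimal adversary $\hat\rho$ is generically \emph{not} minimax-optimal for the minimizer (as rock--paper--scissors shows), so the equality between the pure \alg value and the mixed game value cannot be assumed and must be derived from convexity via the mean-function/Jensen argument above. If one instead defines $\fhat_{\alg}$ directly as the min-component of the game's saddle point, the conclusion is immediate and needs no convexity, but to match the definition in~\eqref{eq:minimax-emp-w} the convexity-based step is essential.
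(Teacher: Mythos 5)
Your proof is correct and its core device --- a saddle point of the bilinear game \eqref{eq:minimax-emp-w-dist} over $\Delta(\cF)\times\Delta(\cW)$, with $\hat\rho$ taken as the max-component --- is exactly the paper's justification, which consists only of the one-line remark that the objective is bilinear so min and max exchange, plus the observation that the claim also follows from Proposition~\ref{prop:algo}. Where you genuinely go beyond the paper is the last step: you correctly point out that a saddle point of the \emph{mixed} game only certifies that best responses to $\hat\rho$ achieve the mixed value, whereas $\fhat_{\alg}$ in \eqref{eq:minimax-emp-w} is defined as a \emph{pure} minimax strategy, and the two coincide only when the pure and mixed values agree. Your Jensen argument (convex $\cF$ per Assumption~\ref{ass:convex-class} plus convexity of the loss in $f(z)$, as for squared loss) is a valid way to close that gap, and the gap is real: with a nonconvex $\cF$ of three functions whose loss vectors against two weights are $(0,1)$, $(1,0)$ and $(0.6,0.6)$, the pure \alg solution is the third function while the unique best responses to the optimal $\hat\rho$ (uniform) are the first two, so the proposition as literally stated fails. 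The paper sidesteps this by implicitly working with randomized solutions --- Proposition~\ref{prop:algo} bounds $\rE_{f\sim P_T}\sup_w$ for the averaged iterate, not $\sup_w$ for a single $f$ --- but Section~\ref{sec:algo} never invokes Assumption~\ref{ass:convex-class} or loss convexity, so your added hypothesis (or a reinterpretation of $\fhat_{\alg}$ as the min-component of the saddle point) is actually needed for the statement as written. One small point of care: attainment of the various infima over $\cF$ and $\Delta(\cF)$ is assumed implicitly throughout the paper, so your appeal to it is consistent with the paper's conventions, though a fully rigorous treatment would add a compactness or $\epsilon$-approximate-minimizer argument.
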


For finite $\cW$ which we consider in this paper, it is also possible to find the approximate $\hat{\rho}$ and 
$\fhat_{\alg}$ efficiently, using the celebrated result of~\citet{freund1996game} to solve this minimax problem through no-regret dynamics. We use the best response strategy for the $P$-player, as finding the best response distribution given a specific $\rho$ is equivalent to finding the function $f\in\cF$ that minimizes $\rE_{w\sim\rho} \Rhat_w(f)$, which can be accomplished through one call to the ERM oracle. For optimizing $\rho$, we use the exponentiated gradient algorithm of~\citet{kivinen1997exponentiated} (closely related to Hedge~\citep{freund1997decision}), which is a common no-regret strategy to learn a distribution over a finite collection of experts, with each $w\in\cW$ being an "expert" in our setting. More formally, we initialize $\rho_1$ to be the uniform distribution over $\cW$ and repeatedly update:
\begin{align}
    f_t &= \arginf_{f\in\cF} \rE_{w\sim\rho_t} \Rhat_w(f),\quad \rho_{t+1}(w) \propto \rho_t(w)\exp\left(\eta \bigg(\Rhat_w(f_t) - \Rhat_w(\fhat_w)\bigg)\right),
    \label{eq:algo}
\end{align}
%
Let us denote the distribution $P_t = (f_1+\ldots+f_t)/t$ for the iterates generated by~\eqref{eq:algo}. Then the results of~\citet{freund1996game} yield the following suboptimality bound on $P_t$. 

\begin{proposition}
    For any $T$ and using $\eta =\sqrt{\tfrac{\ln|\cW|}{B^2T}}$ in the updates~\eqref{eq:algo}, the distribution $P_T$ satisfies
    \begin{equation*}
        \rE_{f\sim P_T} \sup_w [\Rhat_w(f) - \Rhat_w(f_w) ]\leq 2B \sqrt{\frac{\ln|\cW|}{T}}.
    \end{equation*}
    \label{prop:algo}
\end{proposition}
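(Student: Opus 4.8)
The plan is to recognize the updates in~\eqref{eq:algo} as an instance of the Freund--Schapire protocol for solving the bilinear game~\eqref{eq:minimax-emp-w-dist}, and to read the stated bound off the regret incurred by the $\cW$-player. Write the per-round payoff as $g(f,w) = \Rhat_w(f) - \Rhat_w(\fhat_w)$. Since each loss lies in $[0,1]$ and every $w\in\cW$ obeys $w\le B$ (Assumption~\ref{ass:weight-bdd}), and since $\fhat_w$ minimizes $\Rhat_w$, we have $g(f,w)\in[0,B]$ for all $f,w$, with $g\ge 0$. In the protocol the $\cW$-player treats $g(f_t,\cdot)$ as a gain vector and updates $\rho_{t+1}$ by the exponentiated-gradient/Hedge rule, while the $\cF$-player plays the exact best response $f_t=\arginf_f \rE_{w\sim\rho_t}\Rhat_w(f)=\arginf_f \rE_{w\sim\rho_t} g(f,w)$ (the subtracted $\Rhat_w(\fhat_w)$ is independent of $f$), realized by a single ERM call.

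The one quantitative ingredient I need is the Hedge regret bound for gains bounded in $[0,B]$. I would establish, via the standard relative-entropy potential argument, that for every comparator $w\in\cW$,
\begin{equation*}
\sum_{t=1}^T g(f_t,w) - \sum_{t=1}^T \rE_{w'\sim\rho_t}\, g(f_t,w') \le \frac{\ln|\cW|}{\eta} + \frac{\eta B^2 T}{2}.
\end{equation*}
Substituting the prescribed $\eta=\sqrt{\ln|\cW|/(B^2T)}$ balances the two terms and yields a cumulative regret of at most $\tfrac32 B\sqrt{T\ln|\cW|}\le 2B\sqrt{T\ln|\cW|}$, i.e. an average regret of at most $2B\sqrt{\ln|\cW|/T}$. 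Care with this constant is exactly what pins the final bound to $2B$ rather than a larger multiple.

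To conclude I would combine this regret estimate with the best-response property through the usual min--max interchange: because $f_t$ is an exact best response to $\rho_t$, the $\cF$-player contributes no positive slack, so the only surviving term in the suboptimality of the time-averaged play is the $\cW$-player's average regret bounded above. Reading this off for the averaged iterate $P_T$ produces the claimed bound $\rE_{f\sim P_T}\sup_w[\Rhat_w(f)-\Rhat_w(\fhat_w)]\le 2B\sqrt{\ln|\cW|/T}$.

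I expect the combination step to be the main obstacle: converting a per-round guarantee that is averaged against the \emph{played} distributions $\rho_t$ into a statement that holds for the single averaged strategy $P_T$ against the \emph{worst-case} $w$, i.e. moving the supremum over $w$ inside the average over iterates. This is where I would use the bilinearity of~\eqref{eq:minimax-emp-w-dist} together with the best-response property of the $\cF$-player to suppress its regret term, and where convexity of $\cF$ and of the squared loss (Assumption~\ref{ass:convex-class}) lets me apply Jensen's inequality to the averaged function $\tfrac1T\sum_t f_t\in\cF$, so that the supremum over $w$ commutes past the averaging at no cost. The remainder is the routine division of the cumulative regret by $T$.
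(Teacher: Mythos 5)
Your overall route coincides with the paper's: interpret the updates~\eqref{eq:algo} as Freund--Schapire dynamics with an exponentiated-gradient/Hedge player over $\cW$ acting on the gains $g(f,w)=\Rhat_w(f)-\Rhat_w(\fhat_w)\in[0,B]$, an exact best-response player over $\cF$ realized by one ERM call, and an averaging of the iterates; your Hedge regret computation with $\eta=\sqrt{\ln|\cW|/(B^2T)}$ and the resulting $2B\sqrt{T\ln|\cW|}$ cumulative bound match the paper's use of Corollary 2.14 of \citet{shwartz2012online}. The genuine gap is in your combination step, specifically the claim that because $f_t$ is an exact best response ``the $\cF$-player contributes no positive slack, so the only surviving term is the $\cW$-player's average regret.'' Best response gives $\rE_{w\sim\rho_t}\,g(f_t,w)=\inf_{f\in\cF}\rE_{w\sim\rho_t}\,g(f,w)$, which is bounded by the \emph{empirical minimax value} $\inf_{f\in\cF}\sup_{w\in\cW}\,[\Rhat_w(f)-\Rhat_w(\fhat_w)]$, not by zero: this value is a data-dependent constant that does not shrink with $T$ (it is the optimal \alg objective itself, and is bounded away from zero whenever no single $f$ has small empirical regret simultaneously for all $w$). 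Correspondingly, the paper's own proof does \emph{not} end with the flat bound you assert; its final line is $\rE_{f\sim P_T}\sup_w [\Rhat_w(f)-\Rhat_w(\fhat_w)]/B\le\inf_{f\in\cF}\sup_{w\in\cW}[\Rhat_w(f)-\Rhat_w(\fhat_w)]/B+2\sqrt{\ln|\cW|/T}$, i.e.\ a \emph{suboptimality} bound relative to the optimal empirical objective, which is how the surrounding text (``suboptimality bound on $P_t$'') and the downstream use ($T=n^2$ to make optimization error comparable to statistical error) should be read. Your argument for erasing the value term is unsound, and no argument can recover the flat form in general.

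The second problem is your appeal to Jensen's inequality and Assumption~\ref{ass:convex-class}. Here $P_T=(f_1+\cdots+f_T)/T$ denotes the uniform \emph{distribution} over the iterates (a randomized predictor), not the averaged function, so $\rE_{f\sim P_T}\sup_w[\cdot]$ equals $\frac{1}{T}\sum_{t=1}^T\sup_w[\Rhat_w(f_t)-\Rhat_w(\fhat_w)]$ by definition --- no convexity is needed to commute the average, and none is available, since Proposition~\ref{prop:algo} lives in the general bounded/Lipschitz setting of Section~\ref{sec:algo} where Assumption~\ref{ass:convex-class} and the squared loss are not in force. Applying Jensen to $\frac{1}{T}\sum_t f_t\in\cF$ would prove a bound for a different (deterministic, averaged-function) estimator, not for $P_T$. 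Your instinct about the placement of the supremum is nonetheless the right thing to worry about: the Hedge regret bound holds for each \emph{fixed} comparator $w$, so what it directly controls is $\sup_w \frac{1}{T}\sum_t g(f_t,w)$ (supremum outside the time average), which is the standard Freund--Schapire conclusion $\sup_{w}\rE_{f\sim P_T}\,g(f,w)\le \inf_{f\in\cF}\sup_{w\in\cW} g(f,w)+2B\sqrt{\ln|\cW|/T}$; the stronger expectation-of-sup form in the paper is obtained by applying the regret inequality together with the per-round best-response property along the paper's displayed chain, and in either reading the minimax value term remains. The fix is to state and prove the suboptimality version, dropping both the Jensen step and the ``no slack'' claim.
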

At a high-level, Proposition~\ref{prop:algo} allows us to choose $T = n^2$ to ensure that the optimization error is no larger than the statistical error, allowing the same bounds to apply up to constants. The optimization strategy used here bears some resemblance to boosting techniques, which also seek to reweight the data in order to ensure a uniformly good performance on all samples, though the reweighting is not constrained to a particular class $\cW$ unlike here. Note that while the optimization error scales logarithmically in $|\cW|$, the computational complexity is linear in the size of this set, meaning that our strategy is computationally feasible for sets $\cW$ of a modest size. On the other hand, our earlier discussion~\eqref{eq:mro-bdd} suggests that alternative reformulations of the objective might be computationally preferred in the case of class $\cW$ which are continuous. Handling the intermediate regime of a large discrete class $\cW$ in a computationally efficient manner is an interesting question for future research.

\section{Conclusion}
In this paper, we introduce the \alg criterion for problems with distribution shift, and establish learning theoretic properties of optimizing the criterion from samples. We demonstrate the many benefits of reasoning with \emph{uniform regret} as opposed to uniform risk guarantees, and we expect these observations to have implications beyond the setting of distribution shift.

On a technical side, it remains interesting to further develop scalable algorithms for large datasets and weight classes. Better understanding the statistical scaling with the size of the weight class and refining our techniques for important scenarios such as covariate shift are also important directions for future research.

\bibliographystyle{plainnat}
\bibliography{myrefs}

\appendix

\section{Proof of Theorem~\ref{thm:slow}}

Recalling our assumptions that for any $w\in\cW$, we have $\max_z w(z) \leq B$ and $\rE_{z\sim \dref} w(z)^2 \leq \sigma_w^2$, we know that for a fixed $w \in \cW$ and $f\in\cF$, we have, with probability at least $1-\delta$:
\begin{equation}
    \left|\Rhat_{w}(f) - R_w(f)\right| = \order\left(\sqrt{\frac{\sigma_w^2\ln(1/\delta)}{n}} + \frac{B_w\ln(1/\delta)}{n}\right).
    \label{eq:f-dev}
\end{equation}
This is a consequence of Bernstein's inequality applied to the random variable $A = w(z)\ell(z,f(z))$ which is bounded by $B_w$ almost surely, and has a second moment at most $\sigma_w^2$ when $z\sim \dref$. Since $\ell(z,f(z))$ is $L$-Lipschitz in the second argument, if $\|f - f'\|_\infty \leq 1/(nLB)$, we have for any $w\in\cW$ and any $z$:
\begin{align*}
    |w(z)\ell(z,f(z)) - w(z)\ell(z,f'(z))| \leq \frac{B_w L}{nLB} \leq \frac{1}{n},
\end{align*}
where the second inequality follows since $B_w \leq B$ for all $w\in\cW$ by assumption. Hence, defining $\cF'$ to be an $\ell_\infty$ cover of $\cF$ at a scale $1/nLB$, a union bound combined with the bound~\eqref{eq:f-dev} yields that with probability $1-\delta$, we have for all $f\in\cF'$:
\begin{equation*}
    \left|\Rhat_{w}(f) - R_w(f)\right| = \order\left(\sqrt{\frac{\sigma_w^2\ln(N(\cF,1/(nLB))/\delta)}{n}} + \frac{B_w\ln(N(\cF,1/(nLB))/\delta)}{n}\right).
\end{equation*}
Using the Lipschitz property of the loss and further taking a union bound over $w\in\cW$, this gives for all $f\in\cF$ and $w\in\cW$, with probability at least $1-\delta$:
\begin{align}
    \left|\Rhat_{w}(f) - R_w(f)\right| &= \order\left(\sqrt{\frac{\sigma_w^2\ln(N(\cF,1/(nLB))|\cW|/\delta)}{n}} + \frac{B_w\ln(N(\cF,1/(nLB))|\cW|/\delta)}{n}\right) + \frac{1}{n} \nonumber\\
    &= \order\left(\sqrt{\frac{\sigma_w^2d_{\cF,\cW}(\delta)}{n}} + \frac{B_w d_{\cF,\cW}(\delta)}{n}\right) =: \epsilon_w,
    \label{eq:slow-dev}
\end{align}
where the second equality recalls our definition of $d_{\cF,\cW} = 1+\ln\frac{N(1/(nLB),\cF)}{\delta} + \ln\frac{|\cW|}{\delta}$. We now condition on the $1-\delta$ probability event that the bound of Equation~\ref{eq:slow-dev} holds to avoid stating error probabilities in each bound. In particular, applying the bound above to the empirical minimizer $\fhat_w$ for any $w\in\cW$, we observe that 

\begin{align*}
    R_w(\fhat_w) \leq \Rhat_w(\fhat_w) + \epsilon_w \leq \Rhat_w(f_w) + \epsilon_w \leq R_w(f_w) + 2\epsilon_w.
\end{align*}
Hence, for any $f\in\cF$:
\begin{align*}
    \sup_{w\in\cW}\left[R_w(\fhat_{\alg}) - R_w(f_w)\right] &\leq \sup_{w\in\cW} [R_w(\fhat_{\alg}) - \Rhat_w(\fhat_w)] + \sup_w \epsilon_w \\
    &\leq \sup_{w\in\cW}[\Rhat_w(\fhat_{\alg}) - \Rhat_w(\fhat_w)] + 2\sup_w \epsilon_w\\
    &\leq \sup_{w\in\cW} [\Rhat_w(f) - \Rhat_w(\fhat_w)] + 2\sup_w \epsilon_w \tag{since $\fhat_{\alg} = \arginf_{f\in\cF}\sup_{w\in\cW} \Rhat_w(f) - \Rhat_w(\fhat_w)$}\\
    &\leq \sup_{w\in\cW} [R_w(f) - R_w(\fhat_w)] + 4\sup_w \epsilon_w\\
    &\leq \sup_{w\in\cW} [R_w(f) - R_w(f_w)] + 4\sup_w \epsilon_w.
\end{align*}
Taking an infimum over $f\in\cF$ completes the proof of the theorem.

\section{Proof of Theorem~\ref{thm:fast}}

We first prove Theorem~\ref{thm:fast} using Lemma~\ref{lemma:convex-var}, before proceeding to prove the lemma. 

\begin{proof}[Proof of Theorem~\ref{thm:fast}]
We begin with the deviation bound for a fixed $f\in\cF$ and $w\in\cW$ as before, but use the regret random variable $A = w(z) (\ell(z,f(z)) - \ell(z,f_w(z))$ this time. Then by Lemma~\ref{lemma:convex-var}, we have

\begin{align*}
    \rE_{\dref}[A^2] &\leq B_w \rE_{\dref} [w(z)(\ell(z,f(z)) - \ell(z,f_w(z)))^2] = B_w \rE_{w}[(\ell(z,f(z)) - \ell(z,f_w(z)))^2]\\
    &= B_w \rE_w\left[\left((f(x) - y)^2 - (f_w(x) - y)^2\right)^2\right]\\
    &= B_w \rE_w\left[(f(x) - f_w(x))^2(f(x) + f_w(x) - 2y)^2\right]\\
    &\leq 16B_w \rE_w\left[(f(x) - f_w(x))^2\right] \tag{$(|f(x)|, |f_w(x)|, |y| \leq 1$}\\
    &\leq 16B_w \regret_w(f)   \tag{Lemma~\ref{lemma:convex-var}}.
\end{align*}

Thus we see that for any fixed $f\in\cF$ and $w\in\cW$, we have with probability $1-\delta$:
\begin{align*}
    |\regret_w(f)  - [\Rhat_w(f) - \Rhat_w(f_w)]| &= \order\left(\sqrt{\frac{B_w\regret_w(f)\ln(1/\delta)}{n}} + \frac{B_w\ln(1/\delta)}{n}\right)\\
    &\leq \gamma \regret_w(f) + (1+\gamma^{-1})\order\left(\frac{B_w\ln(1/\delta)}{n}\right).
\end{align*}
where $\gamma>0$ is arbitrary.

Hence, with probability at least $1-\delta$, we have for a fixed $f$ and $w$:
\begin{align*}
    (1-\gamma)\regret_w(f) &\leq (\Rhat_w(f) - \Rhat_w(f_w)) + (1+\gamma^{-1})\order\left(\frac{B_w\ln(1/\delta)}{n}\right), \quad \mbox{and}\\
    \Rhat_w(f) - \Rhat_w(f_w) &\leq  (1+\gamma)\regret_w(f) + (1+\gamma^{-1}) \order\left(\frac{B_w\ln(1/\delta)}{n}\right).
\end{align*}
In particular, choosing $f = \fhat_w$ in the first inequality, we see that

\begin{align}
    \Rhat_w(f_w) \leq \Rhat_w(\fhat_w) + (1+\gamma^{-1})\order\left(\frac{B_w\ln(1/\delta)}{n}\right)
    \label{eq:fw-emp-fast}
\end{align}

Using identical arguments as the proof of Theorem~\ref{thm:slow} allows us to turn both statements into uniform bounds over all $f\in\cF$ and $w\in\cW$. Defining $\epsilon' = B d_{\cF,\cW}(\delta)/n$, we now have with probability at least $1-\delta$:

\begin{align*}
   &(1-\gamma) \sup_{w\in\cW}\regret(\fhat_{\alg})\\
    &\leq \sup_{w\in\cW}[\Rhat_w(\fhat_{\alg}) - \Rhat_w(f_w)] + (1+\gamma^{-1})\epsilon'\\
    &\leq \sup_{w\in\cW}[\Rhat_w(\fhat_{\alg}) - \Rhat_w(\fhat_w)] + (1+\gamma^{-1})\epsilon' \tag{$\Rhat_w(f_w)\geq \Rhat_w(\fhat_w)$}\\ 
    &\leq \sup_{w\in\cW} \Rhat_w(f) - \Rhat_w(\fhat_w) + (1+\gamma^{-1})\epsilon' \tag{since $\fhat_{\alg} = \arginf_{f\in\cF}\sup_{w\in\cW} \Rhat_w(f) - \Rhat_w(\fhat_w)$}\\
    &\leq \sup_{w\in\cW} \Rhat_w(f) - \Rhat_w(f_w) + 2(1+\gamma^{-1})\epsilon' \tag{Equation~\ref{eq:fw-emp-fast}}\\
    &\leq (1+\gamma)\sup_{w\in\cW}  [R_w(f) - R_w(f_w)] + 3(1+\gamma^{-1})\epsilon'\\
    &= (1+\gamma)\sup_{w\in\cW} \regret_w(f) + 3(1+\gamma^{-1})\epsilon'.
\end{align*}

Taking an infimum over $f\in\cF$ and
$\gamma=\min(0.5,\sqrt{\epsilon'/\inf_{f\in\cF}\sup_{w\in\cW}\regret_w(f)})$
completes the proof.
\end{proof}

We now prove Lemma~\ref{lemma:convex-var}.

\subsection{Proof of Lemma~\ref{lemma:convex-var}}

Since $\cF$ is convex, if $f_1 \in \cF$ and $f_2 \in \cF$, then $\alpha f_1 + (1-\alpha) f_2 \in \cF$ for $\alpha \in [0,1]$. Then using $f_P = \arginf_{f\in\cF} R_P(f)$, we have for any distribution $P$ and $f \in \cF$:
\begin{align*}
    0 &\leq R_P(\alpha f + (1-\alpha) f_P) - R_P(f_P)\\
    &= \alpha\rE_P[(\alpha f(x) + (2-\alpha) f_P(x) - 2y)(f(x) - f_P(x))]\\
    &= \alpha^2 \rE_P[(f(x) - f_P(x))^2] + 2\alpha \rE_P[(f_P(x) - y)(f(x) - f_P(x))].
\end{align*}
Since this holds for any $\alpha \in [0,1]$, we take the limit $\alpha \downarrow 0$ to conclude that for all $f \in \cF$
\begin{equation}
    \rE_P[(f_P(x) - y)(f(x) - f_P(x))] \geq 0.
    \label{eq:first-order}
\end{equation}

This inequality further allows us to conclude for any $f\in\cF$
\begin{align*}
    R_P(f) - R_P(f_P) &= \rE_P[(f(x) + f_P(x) - 2y)(f(x) - f_P(x)]\\
    &= \rE_P[(f(x) - f_P(x))^2] + 2\rE_P[(f_P(x) - y)(f(x) - f_P(x))] \\
    &\geq \rE_P[(f(x) - f_P(x))^2]. 
\end{align*}

\subsection{Proof of Proposition~\ref{prop:dro-slow}}

We consider a simple problem where there is a class $\cP = \{P_1,P_2\}$ consisting of two distributions that we want to do well under. Each distribution $P_i$ is such that $x\sim P_i$ takes the form $x = \mu_i + \epsilon$ where $\mu_i \in \R$ and $\epsilon$ is $\pm 1$ with probability $1/2$. The function class $\cF = [-C,C]$ for a constant $C$ that will be appropriately chosen in the proof. Let $\dmu = \mu_1 - \mu_2$ be the difference of means. Let us choose $\dref$ to be supported over $\R\times \{1,2\}$ so that $z = (x,1)$ where $x\sim P_1$ with probability 0.5 and $z = (x,2)$ with $x\sim P_2$ with probability 0.5. We set $w_1(z) = 1$ if $z_2 = 1$ and similarly $w_2(z) = 1$ if $z_2 = 2$. Let us define $g(z) = f(z_1)$ for $f\in\cF$. Since $g(z)$ is equivalent to $f(x)$ with $x = z_1$ in this example, we stick to using the notation $f\in\cF$ for the rest of the proof for consistency with our notation throughout. Let us use $R_i$, $\Rhat_i$ to denote the empirical and expected risks on samples from $P_i$ (equivalently $w_i$). Then a direct calculation shows that

\begin{align*}
    R_i(f) = (f - \mu_i)^2 + 1,\quad \mbox{and} \quad \Rhat_i(f) = (f-\mu_i)^2 + 1 - 2S_i(f-\mu_i),
\end{align*}
where $S_i = \sum_{j=1}^n w_i(z_j)\epsilon_j/(\sum_{j=1}^n w_i(z_j))$ with $\epsilon_j$ being the noise realization in $z_j$. Then for sufficiently large $C$, we see that 
\begin{align*}
    f_{\dro} = \argmin_{f\in\cF} \max_i R_i(f) &= \argmin_{f\in\cF} \max\{(f-\mu_1)^2,(f-\mu_2)^2\} = \frac{\mu_1 + \mu_2}{2},
\end{align*}
and the best worst case risk is given by $\max_i R_i(f_{\dro}) = 1+\dmu^2/4$. Now let us examine the empirical situation. We have

\begin{align*}
    \fhat_{\dro} &= \argmin_{f\in\cF} \max_i \Rhat_i(f) = \argmin_{f\in\cF} \max\{(f-\mu_1)^2 - 2S_1(f-\mu_1),(f-\mu_2)^2 - 2S_2(f-\mu_2)\}.
\end{align*}
Let us denote $\fhat_i = \mu_i + S_i$ as the minimizer of $\Rhat_i(f)$. Let us condition on the event $\event$ where:
\begin{equation}
    \event = \{\Rhat_1(\fhat_1) \leq \Rhat_2(\fhat_1)\quad \mbox{and}\quad \Rhat_1(\fhat_2) \geq \Rhat_2(\fhat_2)\}.
\label{eq:event}
\end{equation}
Under this event, since both $\Rhat_1(f)$ and $\Rhat_2(f)$ are continous functions of $f$, they are equal for some $f \in [\min(\fhat_1, \fhat_2), \max(\fhat_1, \fhat_2)]$. So conditioned on $\event$, we seek a solution to
\begin{align*}
    0 &= (f-\mu_1)^2 - 2S_1(f-\mu_1) - (f-\mu_2)^2 + 2S_2(f-\mu_2)\\
    &= 2f(\mu_2 - \mu_1 + S_2 - S_1) + (\mu_1^2 - \mu_2^2 + 2S_1\mu_1 - 2S_2\mu_2),
\end{align*}
so that we get 
\begin{align*}
\fhat_{\dro} &= \frac{\mu_2^2 - \mu_1^2 + 2S_2\mu_2 - 2S_1\mu_1}{2(\mu_2 - \mu_1 + S_2 - S_1)}\\
&= \frac{\dmu(\mu_1+\mu_2) + 2S_1\mu_1 - 2S_2\mu_2}{2(\dmu + \ds)}\\
&= \frac{\mu_1+\mu_2}{2} + \frac{2S_1\mu_1 - 2S_2\mu_2 -(\mu_1+\mu_2)\ds}{2(\dmu+\ds)}\\
&= \frac{\mu_1+\mu_2}{2} + \frac{S_1\mu_1 - S_2\mu_2 - S_1\mu_2 + S_2\mu_1}{2(\dmu+\ds)}\\
&= \frac{\mu_1+\mu_2}{2} + \underbrace{\frac{(S_1+S_2)\dmu}{2(\dmu + \ds)}}_{\xi}.
\end{align*}

Plugging the minimizer back in, we obtain the best worst case population risk as

\begin{align*}
    \max_i R_i(\fhat_{\dro}) - 1 &= \max\{(\fhat_{\dro} - \mu_1)^2,(\fhat_{\dro} - \mu_2)^2\\
    &= \max\left\{(-\dmu/2 + \xi)^2,(\dmu/2 + \xi)^2 \right\}\\
    &= \dmu^2/4 + \xi^2 + \max\left\{-\dmu\xi, \dmu\xi\right\}\\
    &= \dmu^2/4 + \xi^2 + |\dmu\xi|.
\end{align*}

Recalling that $\min_f\max_i R_i(f) = 1 + \dmu^2/4$, we further obtain
\begin{align*}
    \max_i R_i(\fhat_{\dro}) - \max_i R_i(f_{\dro}) &= \xi^2 + |\dmu\xi|.
\end{align*}

In order to complete the lower bound on this regret term, we now analyze the random variable $\xi$, and we choose $\dmu = 1$. Note that from the definitions of $S_i$ and $n_i$, we have with probability at least $1-4\delta$, for $i=1,2$:
\begin{align*}
    |S_i| &\leq c\sqrt{\frac{\log(1/\delta)}{n_i}}, \quad \mbox{and}\\
    \left|n_i - \frac{n}{2}\right| &\leq c\sqrt{\frac{\log(1/\delta)}{n}},
\end{align*}

Under this $1-4\delta$ event, we have $\ds = \order(\sqrt{\ln(1/\delta)/n})$ and $S_1 + S_2 = O(\sqrt{\ln(1/\delta)/n})$. Consequently, under this event
\begin{align*}
    |\xi| = \order\left(\left|\frac{\dmu\sqrt{\ln(1/\delta)}}{2\sqrt{n}(\dmu - \sqrt{\ln(1/\delta)/n})}\right|\right). 
\end{align*}
Choosing $4\delta = 1/n$, we further have
\begin{align*}
    |\xi| = \order\left(\left|\frac{\dmu\sqrt{\ln n}}{\sqrt{n}\dmu}\right|\right) = \order\left(\sqrt{\frac{\ln n}{n}}\right),
\end{align*}
where we have used $\dmu/2 \geq \sqrt{\ln n/n}$ for $n \geq 1$. Note that we have also not set the constant $C$ defining $\cF$ so far. In order for the unconstrained minimizations above to happen for $f\in\cF$, we need $f_{\dro}, \fhat_{\dro} \in \cF$. This means that $C \geq (\mu_1 + \mu_2)/2$ and $C \geq (\mu_2^2 - \mu_1^2 + 2S_2\mu_2 - 2S_1\mu_1)/(2(\mu_2 - \mu_1 + S_2 - S_1))$. The latter is satisfied under our $1-4/n$ event when $C \geq \mu_1 + \mu_2$. 

Finally we recall that we have conditioned so far on the event $\event$ defined in Equation~\ref{eq:event}. We now examine $\P(\event^C)$ under the $1-4/n$ event, and focus on the first condition in~\eqref{eq:event}. We have

\begin{align*}
    \Rhat_1(\fhat_1) - \Rhat_2(\fhat_2) &= -S_1^2 + S_2^2 - (\dmu + \ds)^2\\
    &\leq \order\left(\frac{\log n}{n} - 1 + \sqrt{\frac{\log n}{n}}\right) \leq 0,
\end{align*}
under the $1-4/n$ probability event. Similar reasoning also shows that $\Rhat_2(\fhat_2) \leq \Rhat_1(\fhat_2)$ in this event, so that we get the conclusion of the proposition.
\section{Proofs for Section~\ref{sec:scaling}}

We start with a technical lemma about the concentration of empirical estimates of the second moment of importance weights, which is required for Theorem~\ref{thm:slow-scale}.

\begin{lemma}
    Under Assumption~\ref{ass:weight-bdd}, we have with probability at least $1-\delta$, for all $w\in\cW$:
    \begin{align*}
        \rE_{\dref} w(z)^2 \leq \frac{2}{n}\sum_{i=1}^n w(z_i)^2  + \order\left(\frac{B_w^2\ln(|\cW|/\delta)}{n}\right).
    \end{align*}
    \label{lem:weight-var}
\end{lemma}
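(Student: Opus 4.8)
The plan is to prove a uniform (over $w\in\cW$) upper bound on the true second moment $\rE_{\dref} w(z)^2$ in terms of its empirical counterpart $\frac{1}{n}\sum_i w(z_i)^2$. For a \emph{fixed} $w$, let $X_i = w(z_i)^2$ be i.i.d. with mean $\mu_w := \rE_{\dref} w(z)^2$. Since $0 \le w(z) \le B_w$ almost surely by Assumption~\ref{ass:weight-bdd}, we have $X_i \in [0, B_w^2]$, and moreover the second moment of $X_i$ is controlled by the range: $\rE X_i^2 = \rE w(z)^4 \le B_w^2 \rE w(z)^2 = B_w^2 \mu_w$. First I would apply Bernstein's inequality to $\frac{1}{n}\sum_i X_i$ to obtain, with probability at least $1-\delta$,
\begin{equation*}
    \mu_w - \frac{1}{n}\sum_{i=1}^n X_i \leq \order\left(\sqrt{\frac{B_w^2 \mu_w \ln(1/\delta)}{n}} + \frac{B_w^2\ln(1/\delta)}{n}\right).
\end{equation*}

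The key step is then to absorb the square-root term, which itself depends on the unknown $\mu_w$, using the self-bounding (AM-GM) trick: for any $\gamma > 0$,
\begin{equation*}
    \sqrt{\frac{B_w^2\mu_w\ln(1/\delta)}{n}} \leq \gamma\mu_w + \frac{1}{4\gamma}\cdot\frac{B_w^2\ln(1/\delta)}{n}.
\end{equation*}
Choosing $\gamma = 1/2$ gives $\mu_w \le \frac{1}{n}\sum_i X_i + \tfrac12\mu_w + \order(B_w^2\ln(1/\delta)/n)$, which rearranges to $\mu_w \le 2\cdot\frac{1}{n}\sum_i X_i + \order(B_w^2\ln(1/\delta)/n)$, matching the factor of $2$ in the statement. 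Finally, I would take a union bound over the finite class $\cW$, replacing $\ln(1/\delta)$ by $\ln(|\cW|/\delta)$, to obtain the claimed simultaneous bound for all $w\in\cW$.

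I do not anticipate a genuine obstacle here; the lemma is a routine concentration argument. The only point requiring a little care is the variance/range input to Bernstein: one must observe that because the random variable of interest is $w(z)^2$ rather than $w(z)$, its effective range is $B_w^2$ and its variance proxy is naturally $B_w^2\mu_w$ (not $\sigma_w^2$), so the resulting bound scales with $B_w^2$ as reflected in the final term. The self-bounding step is what produces the clean multiplicative constant $2$ on the empirical second moment, as opposed to a $(1+\gamma)$-type factor; this is the reason the statement is phrased with the explicit factor of $2$ rather than an arbitrary constant.
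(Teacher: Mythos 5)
Your proposal is correct and follows essentially the same route as the paper's proof: bound the range of $w(z)^2$ by $B_w^2$ and its second moment by $B_w^2\,\rE_{\dref}[w(z)^2]$, apply Bernstein's inequality, absorb the square-root term into $\tfrac12\rE_{\dref}[w(z)^2]$ via AM--GM, rearrange to get the factor of $2$, and finish with a union bound over $\cW$. No gaps.
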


\begin{proof}
    Consider the non-negative random variable $A = w(z)^2$ so that $A \leq B_w^2$ with probability 1 when $z\sim\dref$. Then we have 
    \begin{align*}
        \rE_{\dref} [A^2] &= \rE_{\dref}[w(z)^4] \leq B_w^2 \rE_{\dref}[w(z)^2].
    \end{align*}
    Then for a fixed $w\in\cW$, we have by Bernstein's inequality, with probability at least $1-\delta$:
    \begin{align*}
        &\left|\frac{1}{n}\sum_{i=1}^n w(z_i)^2 - \rE_{\dref} w(z)^2\right|\\ 
        &= \order\left(\sqrt{\frac{B_w^2\rE_{\dref}[w(z)^2]\ln(1/\delta)}{n}} + \frac{B_w^2\ln(1/\delta)}{n}\right)\\
        &\leq \frac{\rE_{\dref}[w(z)^2]}{2} + \order\left(\frac{B_w^2\ln(1/\delta)}{n}\right).
    \end{align*}
    Rearranging terms and taking a union bound over $w\in\cW$ completes the proof. 
\end{proof}

We are now ready to prove Theorem~\ref{thm:slow-scale}.

\subsection{Proof of Theorem~\ref{thm:slow-scale}}

Recall the definition $\sigmah^2_w = \frac{1}{n}\sum_{i=1}^n w(z_i)^2$. Plugging the result of Lemma~\ref{lem:weight-var} in Equation~\ref{eq:f-dev}, we see that with probability at least $1-\delta$, for a fixed $f\in\cF$ and $w\in\cW$ we have:

\begin{align*}
    \left|\Rhat_{w}(f) - R_w(f)\right| &= \order\left(\sqrt{\frac{(2\sigmah_w^2 + B_w^2/n)\ln(2/\delta)}{n}} + \frac{B_w\ln(2/\delta)}{n}\right)\\
    &= \order\left(\sqrt{\frac{\sigmah_w^2\ln(2/\delta)}{n}} + \frac{B_w\ln(2/\delta)}{n}\right).
\end{align*}

Now following the proof of Theorem~\ref{thm:slow}, with probability at least $1-\delta$, we have for all $w\in\cW$:

\begin{align*}
    &R_w(\fhat_{\algscale}) - R_w(f_w)\\ &= \Rhat_w(\fhat_{\algscale}) - \Rhat_w(\fhat_w) + \order\left(\sqrt{\frac{\sigmah_w^2(d_{\cF}(\delta) + \log(|\cW|/\delta)}{n}} + \frac{B_w(d_{\cF}(\delta) + \log(|\cW|/\delta)}{n}\right)\\
    &= \Rhat_w(\fhat_{\algscale}) - \Rhat_w(\fhat_w) + \underbrace{\left(\sigmah_w + \frac{B_w}{\sqrt{n}}\right)}_{c_w}\,\underbrace{\order\left(\frac{d_{\cF,\cW}(\delta)}{\sqrt{n}}\right)}_{\epsilon}.
\end{align*}

Now dividing through by $c_w$, we see that for all $f\in\cF$, we have with probability at least $1-\delta$:

\begin{align*}
    \sup_{w\in\cW}\frac{R_w(\fhat_{\algscale}) - R_w(f_w)}{c_w} &\leq \sup_{w\in\cW}\frac{\Rhat_w(\fhat_{\algscale}) - \Rhat_w(\fhat_w)}{c_w} + \epsilon\\
    &\leq \inf_{f\in\cF}\sup_{w\in\cW}\frac{\Rhat_w(f) - \Rhat_w(\fhat_w)}{c_w} + \epsilon\\
    &\leq \inf_{f\in\cF}\sup_{w\in\cW}\frac{R_w(f) - R_w(\fhat_w)}{c_w} + 2\epsilon\\
    &\leq \inf_{f\in\cF}\sup_{w\in\cW}\frac{R_w(f) - R_w(f_w)}{c_w} + 2\epsilon,
\end{align*}    
where the second inequality follows from the definition of \algscale as the empirical optimizer of the objective~\eqref{eq:minimax-emp-w-scale}. As a result, we have for any $w\in\cW$, with probability at least $1-\delta$:

\begin{align*}
    R_w(\fhat_{\algscale}) - R_w(f_w) &\leq c_w\inf_{f\in\cF}\sup_{w'\in\cW}\frac{R_{w'}(f) - R_{w'}(f_w)}{c_{w'}} +2 c_w\epsilon.
\end{align*}

\subsection{Proof of Theorem~\ref{thm:fast-scale}}

We will be terse as the proof is largely a combination of the proofs of Theorems~\ref{thm:fast} and \ref{thm:slow-scale}. Proceeding as in the proof of Theorem~\ref{thm:fast}, we see that with probability at least $1-\delta$, we have for all $w\in\cW$ and $f \in \cF$:
\[
\regret_w(f) \leq (1+\gamma)
[\Rhat_w(f) - \Rhat_w(\fhat_w)] + (1+\gamma^{-1}) \underbrace{B_w}_{c_w} \, \underbrace{\order\left(\frac{d_{\cF,\cW}}{n}\right)}_{\epsilon}
\]
and
\[
[\Rhat_w(f) - \Rhat_w(\fhat_w)]
\leq (1+\gamma)\regret_w(f)   + (1+\gamma^{-1}) \underbrace{B_w}_{c_w} \, \underbrace{\order\left(\frac{d_{\cF,\cW}}{n}\right)}_{\epsilon}
\]
for all $\gamma >0$.

Dividing through by $c_w$ as before and taking a supremum over $w\in\cW$, we obtain with probability at least $1-\delta$:

\begin{align*}
    \frac{\regret_w(\fhat_{\algscale})}{c_w} &\leq (1+\gamma)\sup_{w\in\cW}\frac{\Rhat_w(\fhat_{\algscale}) - \Rhat_w(\fhat_w)}{c_w} + (1+\gamma^{-1}) \epsilon\\
    &\leq (1+\gamma) \inf_{f\in\cF}\sup_{w\in\cW}\frac{\Rhat_w(f) - \Rhat_w(\fhat_w)}{c_w} + (1+\gamma^{-1})\epsilon\\
    &\leq (1+\gamma)^2\inf_{f\in\cF}\sup_{w\in\cW}\frac{\regret_w(f)}{c_w} + (1+\gamma^{-1})(2+\gamma)\epsilon.
\end{align*}

Now following the remaining proof of Theorem~\ref{thm:slow-scale} gives the desired bound.

\section{Well-specified linear regression with covariate shift}
\label{sec:linear}

Let us consider a special case of Theorem~\ref{thm:fast-scale}, where $\cF=\{\beta^\top x~:~\beta\in\R^d,\|\beta\|_2 \leq 1\}$ is the class of linear prediction functions with unit norm weights and the data satisfies $\|x\|_2 \leq 1$. We further assume that $y = x^\top \beta^\star + \nu$, where $\|\beta^\star\|_2 \leq 1$ and $\nu$ is zero-mean noise such that $|y|\leq C$ for some constant $C$ (this just causes additional scaling with $C$ in the bounds of Theorems~\ref{thm:fast} and~\ref{thm:fast-scale}). Suppose further that the covariance $\Sigma_{\dref} := \rE_{x\sim\dref} xx^\top$ is full rank with the smallest eigenvalue equal to $\lambda$. Let $\betah_{\dref}$ be the ordinary least squares estimator, given samples from $\dref$. Then it is well-known that with probability at least $1-\delta$, 
\begin{align}
    \|\betah_{\dref} - \beta^\star\|_{\Sigma_{\dref}}^2 = \order\left(\frac{d\ln(1/\delta)}{n}\right).
    \label{eq:erm-linear}
\end{align}
Consequently, we have for any other distribution $P$ over $(x,y)$:
\begin{align*}
    R_P(\betah_{\dref}) - R_P(\beta^\star) = \rE_P[(x^\top\betah_{\dref} - x^\top\beta^\star)^2] = \order\left(\frac{d\ln(1/\delta)}{\lambda n}\right),
\end{align*}
where the inequality uses $\|x\|_2\leq 1$ and that the smallest eigenvalue of $\Sigma_{\dref}$ is at least $\lambda$. That is, doing OLS under $\dref$ yields a strong guarantee for all target distributions $P$, since we get pointwise accurate predictions in this scenario. However, directly applying the results of Theorem~\ref{thm:fast} would still have additional scaling with $B_w$ for any target distribution with importance weights $w$. On the other hand, let us consider the bound of Corollary~\ref{cor:well} for any class $\cW$ such that $w_0 \equiv 1$ is in $\cW$, so that we always include $\dref$ in our class of target distributions. Since $B_{w_0} = 1$ and $d_{\cF} = d\ln(\frac1\delta)$ in this case, the second bound of the corollary yields with probability at least $1-\delta$:
\begin{align*}
   \|\betah_{\algscale} - \beta^\star\|_{\Sigma_{\dref}}^2 =  R_{\dref}(\betah_{\algscale}) - R_{\dref}(\beta^\star) =   \order\left(\frac{(d\ln(1/\delta)+\ln\tfrac{|\cW|}{\delta})}{n}\right).
\end{align*}
This is comparable to the prediction error bound in~\eqref{eq:erm-linear} for ERM on $\dref$, only incurring an additional $\ln|\cW|$ term compared. Note that ERM is asymptotically efficient in this setting, so we cannot expect to do better and suffer only a small penalty for our worst-case robustness. The approach of learning on $\dref$ alone is of course not robust to misspecification in the linear regression assumption. The guarantee of Theorem~\ref{thm:fast}, in contrast incurs a bound $\sup_{w\in\cW} B_w \order\left(\frac{(d\ln(1/\delta)+\ln\tfrac{|\cW|}{\delta})}{n}\right)$, which can be significantly worse.

\section{Proofs for Section~\ref{sec:algo}}

It is clearly seen that the updates of $\rho_t$ correspond to the exponentiated gradient (EG) algorithm applied to the linear objective $-(\rE_{w\sim \rho} \Rhat_w(f_t) - \Rhat_w(\fhat_w))/B$ at round $t$, where the negative sign happens since the EG algorithm is designed for minimization problems, while we apply it to a maximization problem. The regret guarantee for EG, specifically Corollary 2.14 of~\citet{shwartz2012online} states that for any $w\in\cW$
\begin{align*}
    -\sum_{t=1}^T \rE_{w'\sim \rho_t} \frac{\Rhat_{w'}(f_t) - \Rhat_{w'}(\fhat_{w'})}{B} \leq -\sum_{t=1}^T \frac{\Rhat_w(f_t) - \Rhat_w(\fhat_w)}{B} + \frac{\ln|\cW|}{\eta B} + \eta B T.
\end{align*}
Multiplying through by $B$ and substituting $\eta B = \sqrt{\ln|\cW|/T}$ gives
\begin{align*}
    -\sum_{t=1}^T \rE_{w'\sim \rho_t} [\Rhat_{w'}(f_t) - \Rhat_{w'}(\fhat_{w'}) ] \leq -\sum_{t=1}^T [\Rhat_w(f_t) - \Rhat_w(\fhat_w)] + 2B\sqrt{T\ln|\cW|}.
\end{align*}

Now recalling the definition $P_t = (f_1+\ldots+f_t)/t$, following the proof technique of~\citet{freund1996game} gives that

\begin{align*}
    & \rE_{f\sim P_T} \sup_w \frac{\Rhat_w(f) - \Rhat_w(f_w)}{B} \\
    &= \frac{1}{T} \sum_{t=1}^T \sup_w \frac{\Rhat_w(f_t) - \Rhat_w(f_w)}{B}\\
    &\leq \frac{1}{T}\sum_{t=1}^T \rE_{w\sim \rho_t} \frac{\Rhat_{w}(f_t) - \Rhat_{w}(\fhat_{w})}{B} + 2\sqrt{\frac{\ln|\cW|}{T}}\\
    &\stackrel{(a)}{\leq} \frac{1}{T} \sum_{t=1}^T \inf_{f\in\cF}\rE_{w\sim \rho_t} \frac{\Rhat_{w}(f) - \Rhat_{w}(\fhat_{w})}{B} + 2\sqrt{\frac{\ln|\cW|}{T}}\\
    &\leq \inf_{f\in\cF}\frac{1}{T} \sum_{t=1}^T\rE_{w\sim \rho_t} \frac{\Rhat_{w}(f) - \Rhat_{w}(\fhat_{w})}{B} + 2\sqrt{\frac{\ln|\cW|}{T}}\\
    &\leq \inf_{f\in\cF} \sup_{w\in\cW}\frac{\Rhat_{w}(f) - \Rhat_{w}(\fhat_{w})}{B} + 2\sqrt{\frac{\ln|\cW|}{T}},
\end{align*}
where $(a)$ follows from the best response property of $f_t$ for $\rho_t$~\eqref{eq:algo}. 

\end{document}